\documentclass[lettersize,journal]{IEEEtran}

\usepackage[switch]{lineno}

\usepackage{amsmath,amsfonts,amsthm}

\DeclareMathOperator*{\argmin}{arg\,min}
\usepackage{algorithmic}
\usepackage{algorithm}
\usepackage{array}
\usepackage[caption=false,font=normalsize,labelfont=sf,textfont=sf]{subfig}
\usepackage{textcomp}
\usepackage{stfloats}
\usepackage{url}
\usepackage{verbatim}
\usepackage{graphicx}
\usepackage{cite}
\usepackage{arydshln}
\usepackage{multirow}
\usepackage[normalem]{ulem}

\usepackage{diagbox}

\usepackage{xcolor}
\usepackage{enumitem}
\usepackage{xspace}

\newcommand{\eg}{e.g.,\xspace}
\newcommand{\ie}{i.e.,\xspace}

\theoremstyle{definition}
\newtheorem{definition}{\textbf{Definition}}

\newtheorem{theorem}{\textbf{Theorem}}
\newtheorem{corollary}{\textbf{Corollary}}
\newtheorem*{problem*}{\textbf{Problem Definition}}

\newcommand{\model}{\emph{GNNMoE}\xspace}

% 取消证明结束符
% \renewcommand{\qedsymbol}{}

% \hyphenation{op-tical net-works semi-conduc-tor IEEE-Xplore}
% updated with editorial comments 8/9/2021

\usepackage{pdfpages}    % 关键：拼接外部 PDF
\usepackage{fancyhdr}    % 如需给附录页加统一的页眉页脚

\begin{document}
% \linenumbers
\title{Mixture of Message Passing Experts with Routing Entropy Regularization for Node Classification}

\author{
Xuanze Chen, 
Jiajun Zhou, 
Yadong Li,
Jinsong Chen,
Shanqing Yu, 
Qi Xuan, \IEEEmembership{Senior Member, IEEE}
\thanks{This work was supported in part by National Natural Science Foundation of China under Grants 62503423 and U21B2001, in part by the Key Research and Development Program of Zhejiang under Grants 2022C01018 and 2024C01025, and in part by the Baima Lake Laboratory Joint Fund of Zhejiang Provincial Natural Science Foundation of China under Grant LBMHZ25F020002. \emph{(Corresponding authors: Jiajun Zhou.)}}
% \thanks{Jiajun Zhou is with the Institute of Cyberspace Security, Zhejiang University of Technology, Hangzhou 310023, China, and also with the Binjiang Institute of Artificial Intelligence, ZJUT, Hangzhou 310056, China (e-mail: jjzhou@zjut.edu.cn).}
\thanks{Xuanze Chen, Jiajun Zhou, Yadong Li, Shanqing Yu and Qi Xuan are with the Institute of Cyberspace Security, Zhejiang University of Technology, Hangzhou 310023, China, and also with the Binjiang Cyberspace Security Institute of ZJUT, Hangzhou, 310056, China (e-mail: jjzhou@zjut.edu.cn).}
\thanks{Jinsong Chen is with Faculty of Artificial Intelligence in Education, Central China Normal University, Wuhan 430074, China
(e-mail: guangnianchenai@ccnu.edu.cn)
.}
}

% The paper headers
\markboth{Journal of \LaTeX\ Class Files,~Vol.~14, No.~8, August~2021}%
{Shell \MakeLowercase{\textit{et al.}}: A Sample Article Using IEEEtran.cls for IEEE Journals}

% \IEEEpubid{0000--0000/00\$00.00~\copyright~2021 IEEE}
% Remember, if you use this you must call \IEEEpubidadjcol in the second
% column for its text to clear the IEEEpubid mark.

\maketitle

% \IEEEdisplaynontitleabstractindextext

% \IEEEpeerreviewmaketitle

\begin{abstract}
  Graph neural networks (GNNs) have achieved significant progress in graph-based learning tasks, yet their performance often deteriorates when facing heterophilous structures where connected nodes differ substantially in features and labels. To address this limitation, we propose \model, a novel entropy-driven mixture of message-passing experts framework that enables node-level adaptive representation learning. \model decomposes message passing into propagation and transformation operations and integrates them through multiple expert networks guided by a hybrid routing mechanism. And a routing entropy regularization dynamically adjusts soft weighting and soft top-$k$ routing, allowing \model to flexibly adapt to diverse neighborhood contexts. Extensive experiments on twelve benchmark datasets demonstrate that \model consistently outperforms SOTA node classification methods, while maintaining scalability and interpretability. This work provides a unified and principled approach for achieving fine-grained, personalized node representation learning.
\end{abstract}
    
\begin{IEEEkeywords}
    Graph Neural Networks, Mixture of Experts, Heterophily, Node Classification, Entropy Regularization
\end{IEEEkeywords}

\section{Introduction}

\IEEEPARstart{I}{n} real-world scenarios, complex interactions such as social contacts, financial transactions, and scientific collaboration can be abstracted as graph-structured data. To mine and exploit latent information from these graphs, researchers have proposed a variety of graph representation learning methods~\cite{cao2016deep,gori2005new}, among which graph neural networks (GNNs)~\cite{corso2024graph,xu2018powerful} have attracted extensive attention due to their strong modeling capacity. Through iterative message passing, GNNs have achieved notable success on node classification tasks such as fraud detection~\cite{Meta-IFD,zhou2022behavior}, social bot detection~\cite{yang2023rosgas} and entity recognition~\cite{zhang2025social}. However, classical GNNs~\cite{GCN,GAT,GraphSAGE} rely heavily on local structure and the homophily assumption, which limits their generalization on heterophilous graphs, sparsely connected graphs, and cross-graph transfer tasks~\cite{sun2024breaking,zhao2025grain}. As a consequence, they struggle to accommodate the diversity of node features, topologies, and label distributions in practice, revealing a lack of broad applicability.

Classical GNNs such as GCN~\cite{GCN}, GraphSAGE~\cite{GraphSAGE}, and GAT~\cite{GAT} typically share parameters across the entire graph and learn node representations via a uniform message-passing scheme.
While this design provides a baseline level of generalization, it overlooks node-specific differences in structural attributes and feature distributions, severely limiting the model's expressive power to capture heterophilous information in complex graphs. To alleviate this, subsequent studies have explored multi-channel spectral filtering~\cite{luan2022revisiting, fagcn2021} and higher-order aggregation~\cite{du2022gbk, H2GCN2020} to strengthen modeling of local heterophily, achieving incremental progress. However, as semantic complexity and topological diversity continue to grow in heterophilous scenarios, existing methods still lack sufficiently fine-grained modeling. On the one hand, multi-channel or higher-order strategies often rely on preset rules or static patterns, making them inflexible to diverse structural characteristics; on the other hand, a single aggregation function still presumes a weight-sharing processing style across nodes, lacking dynamic awareness of individual differences. Consequently, developing more personalized and adaptive representation learning remains a critical direction for enhancing the generalizability and expressiveness of GNNs.

\begin{figure}[!htb]
    \centering
    \includegraphics[width=0.9\linewidth]{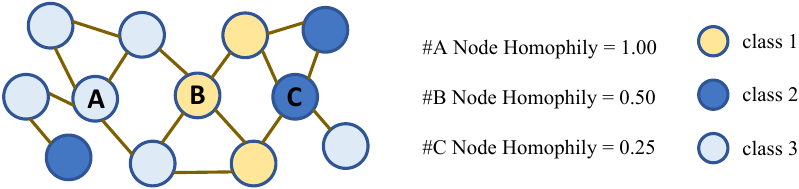}
    \caption{Example of complex neighborhood context in graphs.}
    \label{fig:case1}
\end{figure}

As illustrated in Fig.~\ref{fig:case1}, label distributions within node neighborhoods are non-uniform, reflecting the complexity of behavioral patterns in real-world systems. In social networks~\cite{lee2025sfgcn, fan2019graph}, some users belong to highly homophilous interest communities whose neighbor labels are largely consistent, whereas others exhibit broader interests and thus display pronounced heterophily in neighborhood labels. In co-authorship networks~\cite{liu2022deep, ali2020graph}, researchers focused on a single field predominantly collaborate within that field, forming local structures with high label concentration; conversely, interdisciplinary scholars often collaborate across multiple domains, resulting in highly diverse neighborhood labels. This heterophily in neighborhood label distributions essentially captures the complex semantic contexts surrounding nodes and imposes stronger demands on adaptive modeling capabilities of GNNs.

Regarding this issue, recent studies have proposed an explicit separation learning paradigm~\cite{NCGNN}, which aims to divide the node representation learning process into different encoding subspaces through a metric-driven strategy based on neighborhood information distribution. This paradigm enhances the model's ability to adaptively model the diversity of local information and offers a new pathway toward fine-grained node representations. However, existing separation learning strategies face three critical limitations: 
1) high sensitivity to threshold parameters, which makes performance unstable in high heterophilous scenarios; 
2) coarse-grained subspace partitioning, where each subspace still relies on a uniform message-passing mechanism, preventing fine-grained modeling of individual node differences; 
3) the need for auxiliary computational modules to dynamically compute separation metrics and assign subspaces, substantially increasing training overhead.
These limitations raise a key question: \emph{Can we construct a node-level adaptive message-passing mechanism that allows each node to dynamically select its optimal propagation path and encoding strategy according to its local structure and semantic context, thereby achieving personalized representation and universal modeling on various graphs?}

To address the above problem, we analyze the preferences of nodes with varying degrees of homophily toward different encoding schemes. The specific observations and empirical evidence are presented in Sec.~\ref{sec:Obs}, where we validate the existence of such preferences. Actually, current globally or locally shared message-passing schemes fail to accommodate such fine-grained preferences. These findings provide direct motivation for pursuing node-level adaptive representation learning.
\emph{Therefore, we propose GNNMoE, an entropy-driven mixture of message-passing experts framework tailored for generic node classification.}

\textbf{Contributions:}
\model adopts three key designs: 1) A mixture of message passing expert networks constructed by recombining propagation and transformation operations; 2) A hybrid routing mechanism that incorporates both soft and hard routing to dynamically dispatch expert networks and activation functions; 3) A routing entropy regularization mechanism that dynamically adjusts soft weighting and soft top-$k$ routing, allowing \model to flexibly accommodate diverse encoding preferences. Extensive experiments across homophilous and heterophilous benchmarks demonstrate the effectiveness and superiority of \model. The framework consistently adapts to node-specific structural and semantic variations, achieving fine-grained, personalized representation learning and improved generalization performance.

\section{Preliminaries}
\subsection{Notations} 
A graph is denoted as $G = (V, E, \boldsymbol{X}, \boldsymbol{Y})$, where $V$ and $E$ are the set of nodes and edges respectively, $\boldsymbol{X} \in \mathbb{R}^{|V| \times d}$ is the node feature matrix, and $\boldsymbol{Y} \in\mathbb{R}^{|V| \times C}$ is the node label matrix. 
Here we use $|V|$, $d$ and $C$ to denote the number of nodes, the dimension of the node features, and the number of classes, respectively. 
Each row of $\boldsymbol{X}$ (i.e., $\boldsymbol{x}_i$) represents the feature vector of node $v_i$, and each row of $\boldsymbol{Y}$ (i.e., $\boldsymbol{y}_i$) represents the one-hot label of node $v_i$.
The graph topology information $(V, E)$ can also be denoted by an adjacency matrix $\boldsymbol{A} \in \mathbb{R}^{|V| \times |V|}$, where $\boldsymbol{A}_{ij}=1$ indicates the existence of an edge between $v_i$ and $v_j$, and $\boldsymbol{A}_{ij}=0$ otherwise. Based on the adjacency matrix, we can define the degree distribution of $G$ as a diagonal degree matrix $\boldsymbol{D} \in \mathbb{R}^{|V|\times |V|}$ with entries $\boldsymbol{D}_{ii} = \sum_{j} \boldsymbol{A}_{ij}$ representing the degree value of $v_i$. 
% Node classification is a fundamental task in graph machine learning, and it involves assigning labels to the nodes of a graph based on their features and the graph topology structure.

\begin{figure*}[!htb]
    \centering
    \includegraphics[width=1.0\linewidth]{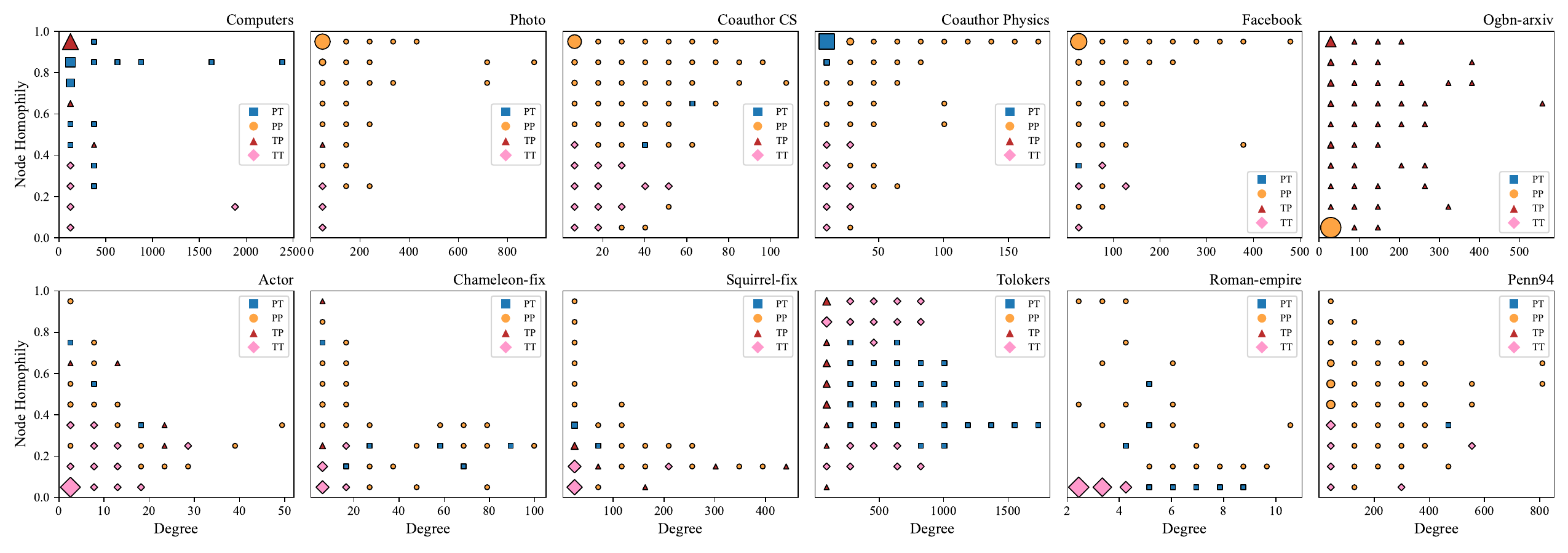}
    \caption{Observation experiment 1. Illustration the preferences of nodes with varying degrees of homophily toward different encoding schemes. Nodes are partitioned into subspaces according to their homophily levels and degrees. Distinct marker shapes highlight the encoding scheme achieving the best node classification performance within each subspace, while the marker size reflects the number of nodes in that subspace.
    }
    \label{fig:ob1}
\end{figure*}

\subsection{Decoupling of Message Passing in GNNs}
From a decoupled perspective, message passing in GNNs can be decomposed into two functionally independent operations, namely Propagation (P) and Transformation (T)~\cite{PT}, which can be formulated as follows:
\begin{equation}
    \begin{aligned}
         &\textbf{Propagation:}\  \boldsymbol{h}_\textit{i}^{(l-1)} \leftarrow \textsf{P}\left(\boldsymbol{h}_\textit{i}^{(l-1)},\  \left\{\boldsymbol{h}_\textit{j}^{(l-1)}\mid j\in\mathcal{N}(i) \right\}\right)\\
         &\textbf{Transformation:}\  \boldsymbol{h}_\textit{i}^{(l)}=\textsf{T}\left(\boldsymbol{z}_\textit{i}^{(l-1)}\right)
    \end{aligned}
\end{equation}
where $\boldsymbol{h}_\textit{i}^{(l)}$ is the node representation during $l$-th message passing, $\mathcal{N}(i)$ is the neighbor set of node $v_\textit{i}$. $\textsf{P}$ is the propagation function that combines message generation and aggregation from neighbor node $v_j$ to target node $v_i$. $\textsf{T}$ performs a non-linear transformation on the state of the nodes after propagation.
Based on the disentanglement, existing GNN architectures can be loosely categorized into four types according to the stacking order of propagation and transformation operations: $\textsf{PTPT}$, $\textsf{PPTT}$, $\textsf{TTPP}$, and $\textsf{TPTP}$.

The propagation operations in message passing admit multiple instantiations, with representative forms including uniform propagation in GCN, attention-weighted propagation in GAT, and aggregator-based propagation in GraphSAGE.

% \textbf{GCN-like Propagation} performs weighted aggregation with symmetry normalization for the features of each node and its neighbors:
\textbf{GCN-like Propagation} performs a symmetrically normalized weighted sum over each node and its neighbors:
\begin{equation}
 \boldsymbol{P}_i = \sum_{v_j\in\mathcal{N}(i)\cup\{v_i\}} \hat{\boldsymbol{A}}_{ij}\boldsymbol{x}_j
\end{equation}
where $\hat{\boldsymbol{A}} = (\boldsymbol{D}+\boldsymbol{I})^{-\frac12}\,(\boldsymbol{A}+\boldsymbol{I})\,(\boldsymbol{D}+\boldsymbol{I})^{-\frac12}$. 
This propagation is essentially a diffusion-style smoothing over the graph, balancing the influence of high- and low-degree nodes and driving neighboring node features toward convergence.

\textbf{GraphSAGE-like Propagation} performs a statistical or learnable summary of a node's neighborhood:
\begin{equation}
    \boldsymbol{P}_i=\operatorname{AGG}\left(\left\{\boldsymbol{x}_j: v_j \in \mathcal{N}(i)\right\}\right)
\end{equation}
where $\operatorname{AGG}$ can be a mean, max-pooling, or an LSTM-based aggregator. This propagation emphasizes extracting a representative neighborhood statistic, enabling the model to capture local context while preserving node-level individuality.

\textbf{GAT-like Propagation} performs adaptive, neighbor-specific weighting via an attention mechanism:
\begin{equation}
    \boldsymbol{P}_i=\sum_{v_j \in \mathcal{N}(i) \cup\left\{v_i\right\}} \alpha_{ij} \boldsymbol{x}_j
\end{equation}
where $\alpha_{i j}$ are attention coefficients learned from feature pairs. This propagation adaptively emphasizes task-relevant neighbors and suppresses noisy or irrelevant connections.

\textbf{Transformation} refers to applying a learnable transformation to each node feature, independent of the underlying graph structure. Representative instantiations include linear projection followed by a nonlinear activation $\sigma$ and dropout regularization as follows:
\begin{equation}
    \boldsymbol{T}_i=\mathrm{Dropout}\left(\sigma(\boldsymbol{W}(\boldsymbol{x}_i))\right)
\end{equation}
where $\boldsymbol{W}(\cdot)$ is the linear projection weight matrix, $\sigma$ is the nonlinear activation function.

\subsection{Mixture of Experts Architecture}
The Mixture of Experts (MoE) architecture scales neural network capacity by combining many specialized sub-networks, called \emph{experts}, while only activating part of them for each input. Given an input $\boldsymbol{x}$, a lightweight routing network produces routing scores:
\begin{equation}
    \boldsymbol{\pi}=\operatorname{Route}(\boldsymbol{x}) \in \mathbb{R}^M
\end{equation}
where $M$ is the number of experts. Each expert $E_i(\cdot)$ processes $\boldsymbol{x}$ and produces an output $\boldsymbol{y}_i$. The final output is a weighted combination of the outputs from the selected experts:
\begin{equation}
    \boldsymbol{y}=\sum_{i=1}^M \pi_i(\boldsymbol{x})\cdot E_i(\boldsymbol{x})
\end{equation}
where the routing weights $\pi_i(\boldsymbol{x})$ are derived from $\boldsymbol{\pi}$ through a normalization (\eg softmax).
% 专家选择机制通常有两种，软路由和硬路由。对于前者，所有专家贡献，但是权重不同，公式化为；对于后者，为了减少计算，仅仅topk的专家被激活选择，形式化为
The expert selection mechanism typically falls into two categories: soft routing and hard routing. For the former, all experts contribute but with different weights $\pi_i(\boldsymbol{x})$, which can be formalized as follows:
\begin{equation}
    \pi_i(\boldsymbol{x})=\frac{\exp \left(\boldsymbol{\pi}_i\right)}{\sum_{j=1}^M \exp \left(\boldsymbol{\pi}_j\right)}
\end{equation}
For the latter, to reduce computation, only the top-$k$ experts are activated and selected, which can be formalized as follows:
\begin{equation}
    \pi_i(\boldsymbol{x})= \begin{cases}\frac{\exp \left(\boldsymbol{g}_i\right)}{\sum_{j \in \operatorname{top-k}(\boldsymbol{g})} \exp \left(\boldsymbol{g}_j\right)}, & i \in \operatorname{top-k}(\boldsymbol{g}) \\ 0, & \text { otherwise. }\end{cases}
\end{equation}

%%%%%%%%%%%%%% Observation %%%%%%%%%
\section{Observation}\label{sec:Obs}
To explore the preferences of nodes with varying degrees of homophily toward different encoding schemes, we first decompose the message-encoding mechanism of GNNs into two fundamental operations: Propagation (\textsf{P}) and Transformation (\textsf{T}). Since mainstream GNNs can be viewed as compositions of these two operations, we obtain four distinct composite encoding schemes: \textsf{PT}, \textsf{TP}, \textsf{TT}, \textsf{PP}. Specifically, \textsf{PT} denotes propagation followed by transformation, \textsf{TP} denotes transformation followed by propagation, \textsf{PP} denotes two consecutive propagations, and \textsf{TT} denotes two consecutive transformations. We conduct observational experiments on 12 datasets (six homophilous graphs and six heterophilous graphs) to analyze node-level encoding preferences, as illustrated in Fig.~\ref{fig:ob1}.
In these experiments, nodes are partitioned into subspaces according to their homophily levels and degrees. Distinct marker shapes highlight the encoding scheme achieving the best node classification performance within each subspace, while the marker size reflects the number of nodes in that subspace. A clear trend emerges: regardless of whether the graph is homophilous or heterophilous, high-homophily nodes tend to prefer encoding schemes involving propagation operations (\ie \textsf{PT}, \textsf{TP}, \textsf{PP}), whereas low-homophily nodes favor schemes that rely solely on transformation (\textsf{TT}).
A reasonable explanation for this phenomenon is that nodes with high homophily exhibit smoother neighborhood feature distributions, where propagation facilitates the aggregation of homophilous information. In contrast, low-homophily nodes encounter diverse intra- and cross-community feature distributions, for which nonlinear transformation helps extract salient information and filter noise. Furthermore, even nodes with similar homophily levels may favor different encoding schemes, and these differences become more evident in low-homophily scenarios, especially in heterophilous graphs such as Chameleon-fix, Squirrel-fix, and Tolokers.
Finally, different node subspaces across datasets (even among homophilous graphs) exhibit markedly diverse encoding preferences, as observed in Computers, Photo, and Ogbn-arxiv.
In summary, these empirical observations indicate that nodes situated in different neighborhood context display distinct preferences toward encoding schemes. Achieving universal and high-performance node classification thus requires message-passing and encoding mechanisms that are adaptively conditioned on the neighborhood context.

\begin{figure*}[htbp]
    \centering 
    \includegraphics[width=\linewidth]{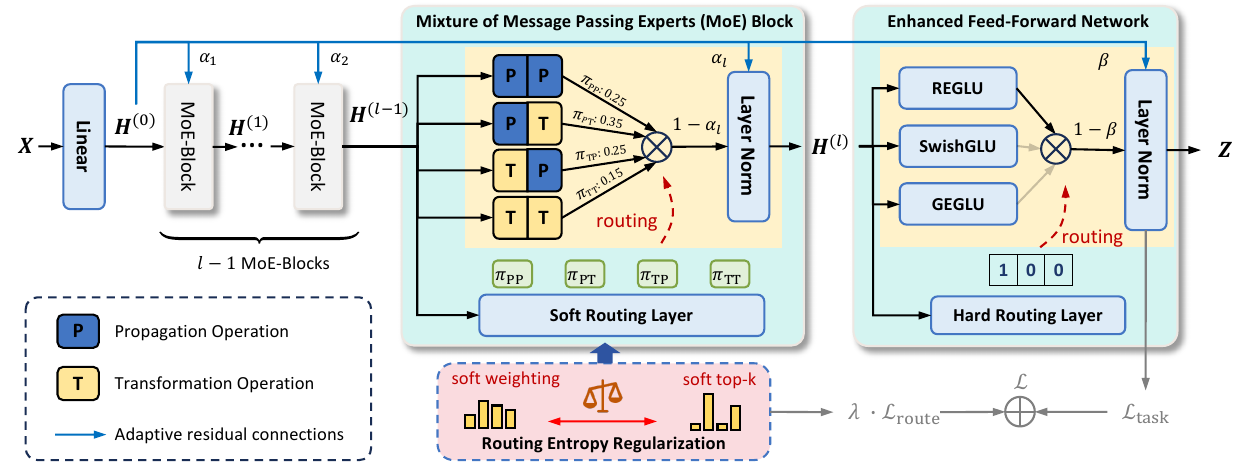}
    \caption{Illustration of \model architectures. The complete workflow proceeds as follows:
    % 1) the MoE network processes information from target nodes using different message-passing schemes;
    % 2) the soft gating layer estimates the contribution of each message-passing expert based on node characteristics, selects the relevant experts, and aggregates their outputs;
    % 3) the entropy-constrained gating sharpening mechanism guides the soft routing strategy to more precisely allocate message passing experts to different nodes in different types of graphs, enhancing the generalization ability for node representation learning; 
    % 3) the hard gating layer selects the appropriate activation layers to enhance the expressive power of the FFN.
    1) each node is processed by multiple message-passing experts, producing diverse candidate representations based on different encoding strategies;
    2) a soft routing network computes routing scores conditioned on the node's features, aggregates the experts accordingly, and produces a preliminary multi-expert representation;
    3) an entropy-driven routing adapter dynamically adjusts the routing process, striking a balance between fully weighted aggregation and approximate top-k expert activation;
    4) the aggregated representation is refined by an enhanced FFN with hard routing that adaptively selects the activation function, jointly improving the expressiveness of final node representation.
    }
    \label{fig:framework}
\end{figure*}

%%%%%%%%%%%%%%%%%%%%%%%%%%%%%%%%%%%%%%%%%%%%%%%%%%%%%%%%%%%%%%%%%%%%%%
\section{Methodology}\label{sec:method}
Based on the above observations, we note that nodes within a single graph or across multiple graphs exhibit varying levels of homophily or heterophily, and show preferences toward specific encoding strategies. To achieve node-level adaptive representation learning, we introduce a mixture-of-experts (MoE) architecture into graph neural networks. By designing expert networks with diverse encoding paradigms and equipping them with a flexible expert routing mechanism, we develop a \model framework. 
The core components of \model are as follows:
1) A \textbf{mixture of message-passing experts block} integrates multiple experts with diverse encoding paradigms, while a \textbf{soft routing layer} computes node-specific routing scores to adaptively fuse expert contributions;
2) An \textbf{entropy-driven routing adapter} dynamically adjusts soft weighting or soft top-$k$ routing to flexibly coordinate different encoding strategies;
4) An \textbf{enhanced feed-forward network (EFFN)} with \textbf{hard routing layer} refines aggregated outputs and adaptively selects the most suitable activation function to boost expressiveness.
The complete framework is illustrated in Fig.~\ref{fig:framework}.

\subsection{General GNNMoE Architecture}
From a macro perspective, \model is composed of stackable MoE-blocks together with an enhanced FFN (EFFN) and an entropy-driven routing adapter. It takes node features and adjacency information as input and produces the final node representations as output. From a micro perspective, each MoE-block consists of a message-passing expert network and a soft routing layer, while the EFFN is composed of a standard FFN augmented with a hard routing layer. The design details of each module will be introduced below.

\subsubsection{Mixture of Message-Passing Experts Block}
This module incorporates a mixture of message-passing experts, each constructed from distinct combinations of propagation and transformation operations. These experts represent diverse encoding paradigms, allowing \model to capture various structural patterns across different graphs. By assembling multiple experts into a unified module, the module provides a flexible basis for node-level adaptive representation learning.

First, the input features $\boldsymbol{X}$ will be transformed into an initial feature embedding through a linear transformation parameterized by $\boldsymbol{W}_0\in\mathbb{R}^{d\times d^\prime}$ and a ReLU activation:
\begin{equation}
  \boldsymbol{H}^{(0)}=\operatorname{ReLU}\left(\boldsymbol{X}\boldsymbol{W}_0\right)
\end{equation}
where $d^\prime$ is the hidden dimension.
Next, we stack several mixture of message-passing experts blocks, called MoE-blocks, to further learn node representations. Each MoE-block consists of a message passing expert network $\mathcal{E}(\cdot)$ and a soft routing layer $\operatorname{SR}(\cdot)$, where $\mathcal{E}=\left\{\textsf{PP}, \textsf{PT}, \textsf{TP}, \textsf{TT}\right\}$ contains four message passing experts specialized in handling graph features from different neighborhood contexts. 

For the $(l)$-th MoE-block, it takes the node representation output from the $(l-1)$-th MoE-block as input, then computing the routing scores through the soft routing layer: 
\begin{equation}\label{eq: pi}
\begin{aligned}
    \boldsymbol{\pi}&=\operatorname{SR}\left(\boldsymbol{H}^{(l-1)}\right) \\
    &= \operatorname{Softmax}\left(\boldsymbol{W}_2^{(l)} \cdot \operatorname{ReLU}\left( \boldsymbol{H}^{(l-1)}\boldsymbol{W}^{(l)}_1\right)\right)
\end{aligned}
\end{equation}
where $\boldsymbol{\pi}=\left\{\pi_{1},\pi_{2},\pi_{3},\pi_{4} \right\}\in\mathbb{R}^4$ is the routing weight vector, $\boldsymbol{W}_1^{(l)}$ and $\boldsymbol{W}_2^{(l)}$ are the transformation weights.
Next, the graph messages processed by different experts are aggregated via routing weights, and an initial multi-expert representation is generated through adaptive residual connections:
\begin{equation}
 \begin{aligned}
 &\boldsymbol{H}^{(l-1)} \leftarrow \sum_{i=1}^{4} \pi_{i} \cdot \mathcal{E}_{i}\left(\boldsymbol{A}, \boldsymbol{H}^{(l-1)}\right) \\
 &\boldsymbol{H}^{(l)}= \operatorname{LN} \left( \alpha_l \cdot \boldsymbol{H}^{(0)} + (1 - \alpha_l) \cdot \boldsymbol{H}^{(l-1)} \right)
\end{aligned}
\end{equation}
where $\operatorname{LN}(\cdot)$ denotes the layer normalization operation, and $\alpha_\textit{l}$ is a learnable parameter that controls the adaptive initial residual connection.

\subsubsection{Enhanced Feed-Forward Network}
After message passing via $l$ MoE-blocks, \model effectively fuses node attribute information with topological structure. To further improve the expressiveness of node representations, and inspired by the role of FFNs in Graph Transformer architectures, we design an EFFN module within \model. Specifically, the EFFN consists of a hard routing layer $\operatorname{HR}(\cdot)$ and a mixture of activation experts $\mathcal{A}=\left\{\text{SwishGLU},\text{GEGLU},\text{REGLU} \right\}$. 
Each activation expert offers distinct benefits:
SwishGLU~\cite{SwishGLU} combines Swish activation with gating to facilitate stable gradient propagation; 
GEGLU~\cite{SwishGLU} introduces gated additive activations to enrich nonlinear expressiveness; 
and REGLU~\cite{SwishGLU} extends ReLU with gating to alleviate gradient vanishing while maintaining computational efficiency. 

Specifically, the multi-expert representations of node features encoded by $l$ MoE-blocks are then input into a hard routing layer, which adaptively selects the most suitable activation function for further feature encoding:
\begin{equation}
 j=\operatorname{HR}\left(\boldsymbol{H}^{(l)}\right)=\operatorname{Gumbel\_Softmax}\left(\boldsymbol{H}^{(l)}\right)
\end{equation}
where $j \in\{1,2,3\}$. The selected expert will encode $\boldsymbol{H}^{(l)}$ to enhance its expressiveness, followed by an adaptive residual connection to generate the final node representation:
\begin{equation}
 \begin{aligned}
  &\boldsymbol{Z} \leftarrow \mathcal{A}_\textit{j}\left(\boldsymbol{H}^{(l)}\right) = \left( \sigma_\textit{j}\left(\boldsymbol{H}^{(l)}\boldsymbol{W}_3\right)\otimes \boldsymbol{H}^{(l)}\boldsymbol{W}_4 \right)\boldsymbol{W}_5\\
 &\boldsymbol{Z} = \operatorname{LN}\left(\beta \cdot \boldsymbol{H}^{(0)}+(1-\beta)\cdot \boldsymbol{Z}\right)
 \end{aligned}
\end{equation}
where $\sigma\in \left\{\text{Swish},\text{GELU},\text{ReLU}\right\}$, $\boldsymbol{W}_3$, $\boldsymbol{W}_4$, $\boldsymbol{W}_5\in\mathbb{R}^{d^\prime\times d^\prime}$ are the transformation weights, $\otimes$ is the element-wise multiplication, $\beta$ is a learnable parameter that controls the adaptive residual connection.

\begin{figure}[!htp]
    \centering
    \includegraphics[width =\linewidth]{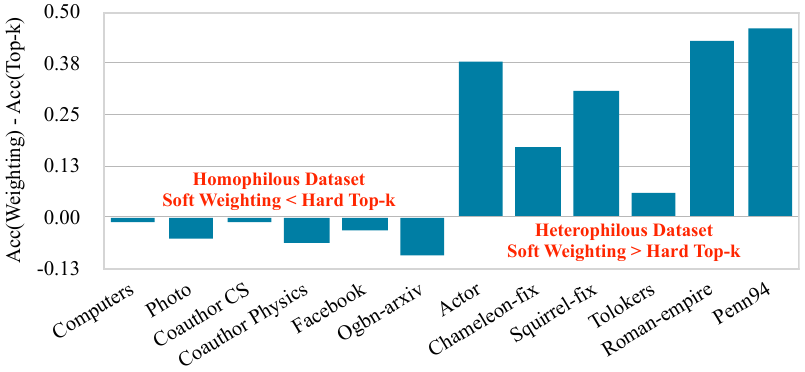}
    \caption{Observation experiment 2. Preference for expert routing strategies across different graphs.}
    \label{fig:obs2}
\end{figure}

\subsubsection{Entropy-driven Routing Adapter}
Existing MoE studies typically adopt either hard top-$k$ routing~\cite{GMoE,BP-MoE} or soft weighting~\cite{Link-MoE} for expert selection. However, our observation experiments\footnote{The hyperparameter settings for Observation 2 follow Section~\ref{sec: experiment}. For hard top-$k$ routing, $k\in\{1,2,3\}$, and the reported results correspond to the optimal hyperparameter configuration.} reveal that these two routing strategies suit different scenarios and thus have limited generalizability. Specifically, by replacing the soft weighting routing in \model with a hard top-$k$ routing and comparing node classification performance across different graphs, as shown in Fig.~\ref{fig:obs2}, we observe that routing preferences vary by graph type. Homophilous graphs tend to favor hard top-$k$ routing, as their nodes share similar features and structures, so a few dominant encoding paradigms are often sufficient, while combining too many may introduce noise. In contrast, heterophilous graphs prefer soft weighting, since their nodes exhibit diverse attributes and structural patterns, making it necessary to integrate multiple encoding paradigms to achieve expressive representations.

Motivated by the above observation, we argue that the default soft weighting routing strategy in \model remains limited.
To achieve adaptive routing strategies, we aim to make the routing weight distribution produced by the softmax operation in Eq.~(\ref{eq: pi}) adjustable. Specifically, in heterophilous scenarios, the model should generate a relatively smooth routing weight distribution, enabling multiple experts to participate jointly in inference and thus fully exploiting their complementary capabilities. Conversely, in homophilous scenarios, only a few dominant experts are needed for inference, where the routing weight distribution should be sharper to suppress noise.
A straightforward idea is to control the smoothness of the routing weights by adjusting the temperature parameter of the softmax function. However, this practice suffers from an inherent limitation: when the temperature changes from $\tau$ to $\tau'$, the routing layer can simply rescale all routing logits by a factor of $\tau'/\tau$, completely offsetting the effect of the temperature change on the forward propagation. This implies that merely tuning the temperature cannot effectively control the routing weight distribution.

To address this issue, we observe a strong connection between the temperature and entropy. The routing weights $\boldsymbol{\pi}$ output by the soft routing layer can be regarded as a probability distribution over expert selection, whose entropy reflects the dispersion of the routing weights: a higher entropy indicates joint reliance on multiple experts, whereas a lower entropy suggests a stronger preference for a few dominant experts. 
Based on this, we propose a routing entropy regularization mechanism. Specifically, we compute the mean entropy of routing weights across all nodes and layers and incorporate it into the training objective as a regularization term:
\begin{equation}
    \mathcal{L}_\text{route}=-\frac{1}{l\cdot |V|}\sum_{i=1}^{|V|}\sum_{t=1}^{l}\sum_{g=1}^{4}\pi_{g}^{i,t}\cdot \log \pi_{g}^{i,t}
\end{equation}
where $g$ indexes message-passing experts.
By minimizing $\mathcal{L}_\text{route}$, we can guide the soft routing layer to assign higher weights for a subset of experts, thereby sharpening the routing weight distribution to approximate a soft top-$k$ routing. 
We introduce a hyperparameter $\lambda\in\{0.001, 0.01, 0.1, 1\}$ to control the strength of the routing entropy regularization, enabling flexible adjustment between soft weighting and soft top-$k$ routing. This design allows dynamic transitions between expert specialization and collaborative behavior.

\subsubsection{Model Training}
For node classification, we append a prediction head $f_\text{pred}$ parameterized by $\boldsymbol{W}_6\in\mathbb{R}^{d^\prime \times C}$ followed by a Softmax activation to obtain the node predictions. During model training, cross-entropy classification loss is used as the main optimization objective.
\begin{equation}
\begin{aligned}
\hat{\boldsymbol{Y}}&=\operatorname{Softmax}\left( \boldsymbol{Z}\boldsymbol{W}_6 \right)  \\
 \mathcal{L_\text{task}} &= -\operatorname{trace}\left(\boldsymbol{Y}_\text{train}^\top \cdot \log \hat{\boldsymbol{Y}}_\text{train} \right) 
\end{aligned}
\end{equation}
where the trace operation $\operatorname{trace}\left( \cdot \right)$ is used to compute the sum of the diagonal elements of the matrix. Furthermore, we combine the task loss $\mathcal{L}_\text{task}$ with the routing entropy regularization loss $\mathcal{L}_\text{route}$ to form the final optimization objective:
\begin{equation}
 \mathcal{L} = \mathcal{L}_\text{task} + \lambda \cdot \mathcal{L}_\text{route}
\end{equation}

Unlike prior MoE methods, in which hard top-$k$ routing requires manual configuration of $k$ and lacks flexibility, while soft weighting may dilute specialized experts, our entropy-driven soft routing mechanism unifies the strengths of both. By adjusting the regularization coefficient $\lambda$ and minimizing $\mathcal{L}$, the routing strategy transitions between soft weighting and soft top-$k$. This unified design improves performance across diverse graphs and enhances generalizability.

\subsection{Routing Entropy Regularization Theory}\label{sec:obsAndInt}
To further substantiate the design motivation of our entropy-driven routing adapter, we provide a theoretical analysis in this subsection. Specifically, in the routing mechanism of \model, for each MoE-block (assume it is the $t$-th), the soft routing layer produces a set of routing weights $\boldsymbol{\pi}^{i,t}=\{\pi_g^{i,t}\}_{g=1}^m$ over the expert set $\mathcal{E}=\{E_1,\ldots,E_m\}$ for any node $v_i$. 
Before introducing entropy regularization, the routing mechanism of \model corresponds to \emph{soft weighting} (\ie weighted aggregation of multiple experts). After incorporating entropy regularization, we expect \model to adaptively transition between ``soft weighting'' and ``soft top-$k$'' (a few experts dominate) across different nodes and graphs. In particular, as the strength of entropy regularization increases, the routing strategy should effectively approximate soft top-$k$ routing. To theoretically establish that entropy regularization of routing weights enables adaptive control between soft weighting and soft top-$k$, we first decompose the complex model into a node-wise, layer-wise surrogate optimization problem.
Due to space limitations, we provide only a concise version of the theoretical proof in the main text, please refer to Appendix~\ref{app: proof} for more details.

\begin{problem*}
    For a given node, its aggregated output in a MoE-block is $\boldsymbol{H}' =  \sum_i \pi_i \cdot \mathcal{E}_i\left(\boldsymbol{A}, \boldsymbol{H}\right)$, where $\boldsymbol{\pi}=[ \pi_1, \cdots, \pi_m ]\in\Delta^{m}$ are the routing weights from the soft routing layer, $\Delta^{m}:=  \{  \boldsymbol{\pi} \in\mathbb{R}^m : \pi_g\geq0, \sum_g \pi_g=1\}$. The overall training objective is
    \begin{equation}\label{eq: ori-optim-object}
    \min _{\boldsymbol{\pi} \in \Delta^m} \mathcal{L}(\pi)=\underbrace{\mathcal{L}_{\text {task }}\left(\boldsymbol{H}^{\prime}(\boldsymbol{\pi})\right)}_{\text {task loss}}+\lambda \underbrace{H(\boldsymbol{\pi})}_{\text {routing entropy}} 
    \end{equation}
    Since solving the above optimization directly is difficult, we adopt a local surrogate strategy. Specifically, during a routing parameter update, we freeze all parameters except $\boldsymbol{\pi}$ and take a first-order approximation of $\mathcal{L}_\text{task}$ at iteration $t$:
    \begin{equation}\label{eq: 1-order-approx}
        \mathcal{L}_{\text {task }}(\boldsymbol{\pi}) \approx \mathcal{L}_{\text {task }}\left(\boldsymbol{\pi}^t\right)+\left\langle\boldsymbol{\ell}, \boldsymbol{\pi}-\boldsymbol{\pi}^t\right\rangle+o\left(\left\|\boldsymbol{\pi}-\boldsymbol{\pi}^t\right\|\right)
    \end{equation}
    where $\boldsymbol{\ell}=\nabla_{\boldsymbol{\pi}} \mathcal{L}_{\text {task }}\left(\boldsymbol{\pi}^t\right)$. This gradient can be interpreted as encoding which experts are more favorable for reducing the task loss. For clarity, define $u_i:=-\ell_i$ as the instantaneous gain from selecting expert $i$. 
    Substituting Eq.~(\ref{eq: 1-order-approx}) into Eq.~(\ref{eq: ori-optim-object}) and dropping constants independent of $\boldsymbol{\pi}$, the surrogate optimization problem at step $t$ becomes:
    \begin{equation}
    \min _{\boldsymbol{\pi} \in \Delta^m}\langle \boldsymbol{\ell}, \boldsymbol{\pi}\rangle+\lambda H(\boldsymbol{\pi}) \quad \Leftrightarrow \quad \min_{\boldsymbol{\pi} \in \Delta^m} \langle -\boldsymbol{u}, \boldsymbol{\pi}\rangle+\lambda H(\boldsymbol{\pi}) 
    \end{equation}
\end{problem*}

\begin{theorem}[\textbf{Temperature property of entropy-driven routing}]
    \emph{Suppose there are $m\ge 2$ message-passing experts and the routing weight distribution over experts is $\boldsymbol{\pi}=[\pi_1,\ldots,\pi_m]\in\Delta^{m}$, where feasible region
    $\Delta^{m}:=\{\boldsymbol{\pi}\in\mathbb{R}^m, \pi_g\ge 0, \sum_{g=1}^m \pi_g=1\}$.
    For a node $v_i$ at a given MoE-block, each expert has an instantaneous gain $u_g\in\mathbb{R}$ and $\boldsymbol{u}=[u_1, \ldots, u_m] \in \mathbb{R}^m$. Consider the following routing optimization problem:} 
    \begin{equation}
      \min _{\boldsymbol{\pi} \in \Delta^m}\langle -\boldsymbol{u}, \boldsymbol{\pi}\rangle+\lambda H(\boldsymbol{\pi}) ,
    \end{equation}
    \emph{where $\lambda>0$ is the entropy regularization coefficient. Then the optimal routing $\pi_g^{t+1}(\lambda)$ has a unique closed-form solution.}
  \end{theorem}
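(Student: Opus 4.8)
The plan is to treat the surrogate as a strictly convex program over the probability simplex $\Delta^m$ and to recover its minimizer in closed form through Lagrangian stationarity. Concretely, I would first establish existence and uniqueness of the optimizer, then argue that it lies in the relative interior of $\Delta^m$, and finally solve the first-order conditions to obtain an explicit Gibbs/softmax expression in which $\lambda$ acts as a temperature.

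For existence and uniqueness, I would note that the objective $F(\boldsymbol{\pi})=\langle-\boldsymbol{u},\boldsymbol{\pi}\rangle+\lambda H(\boldsymbol{\pi})$ is continuous on the compact set $\Delta^m$, so a minimizer exists by the extreme value theorem. The linear term $\langle-\boldsymbol{u},\boldsymbol{\pi}\rangle$ contributes nothing to curvature, while the entropic regularizer is separable with strictly positive second derivative $\lambda/\pi_g$ in each coordinate; hence its Hessian is a positive-definite diagonal matrix and $F$ is strictly convex on the interior of $\Delta^m$. Strict convexity over a convex feasible set then forces the minimizer to be unique.

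Next I would pin down the location of the optimum, and this is the step I expect to be the main obstacle. The crucial observation is that the gradient of the entropic term behaves like $\lambda(\log\pi_g+1)$ and diverges to $-\infty$ as any coordinate $\pi_g\to 0^+$, so the negative gradient always points strictly into the simplex near its boundary. This barrier behavior rules out any boundary point as a candidate optimum and guarantees the minimizer is interior, which is precisely what allows the nonnegativity constraints $\pi_g\ge 0$ (and their associated KKT multipliers) to be discarded and the problem reduced to a single equality-constrained stationarity condition. Making this interiority argument rigorous, rather than merely plausible, is the delicate part of the proof.

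With interiority secured, I would introduce a single Lagrange multiplier $\mu$ for $\sum_g\pi_g=1$, form $\mathcal{L}(\boldsymbol{\pi},\mu)=\langle-\boldsymbol{u},\boldsymbol{\pi}\rangle+\lambda H(\boldsymbol{\pi})+\mu\left(\sum_g\pi_g-1\right)$, and set $\partial\mathcal{L}/\partial\pi_g=0$. Solving the resulting log-linear equation expresses each $\pi_g$ as an exponential of $u_g/\lambda$ up to a common normalizing constant, and eliminating $\mu$ via $\sum_g\pi_g=1$ yields the unique closed form $\pi_g^{t+1}(\lambda)=\exp(u_g/\lambda)/\sum_{h=1}^m\exp(u_h/\lambda)$. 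This is a Boltzmann distribution in which $\lambda$ plays the role of temperature, and I would close by reading off the two limiting regimes, namely $\lambda\to 0^+$ concentrating the mass on $\argmax_g u_g$ (soft top-$k$) and $\lambda\to\infty$ driving $\boldsymbol{\pi}$ toward the uniform distribution (soft weighting), to confirm the claimed temperature property.
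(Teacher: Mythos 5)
Your proposal rests on a sign error that unravels the whole argument. In this paper $H(\boldsymbol{\pi})$ is the Shannon entropy $-\sum_g \pi_g\log\pi_g$ (this is exactly what $\mathcal{L}_\text{route}$ computes, and the paper's objective adds $+\lambda H$ and \emph{minimizes} it precisely so that training sharpens the routing weights). Shannon entropy is strictly \emph{concave}: the second derivative of $\lambda H$ in each coordinate is $-\lambda/\pi_g$, not $+\lambda/\pi_g$ as you claim. Hence the bare surrogate $\langle-\boldsymbol{u},\boldsymbol{\pi}\rangle+\lambda H(\boldsymbol{\pi})$ is strictly concave, its minimum over $\Delta^m$ is attained at a vertex (a hard one-hot selection), and there is no interior minimizer at all. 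Your three pillars --- strict convexity, the barrier/interiority argument, and uniqueness-from-convexity --- all apply to the negative entropy $\sum_g\pi_g\log\pi_g$, i.e.\ to a different problem. Indeed, if you run your Lagrangian computation with the paper's $H$, the interior stationary point comes out as $\pi_g\propto\exp(-u_g/\lambda)$ and it is the \emph{maximizer} of the concave objective, not the minimizer. The symptom is visible in your conclusion: you find that $\lambda\to\infty$ drives $\boldsymbol{\pi}$ to uniform, whereas the theorem (and Corollary 1, and the entire design of the regularizer) requires the opposite --- larger $\lambda$ must \emph{sharpen} the distribution toward soft top-$k$.

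The paper escapes this difficulty by not solving the bare surrogate at all. Its proof adds a KL trust-region (mirror-descent proximal) term and solves
\begin{equation}
\boldsymbol{\pi}^{t+1}=\argmin_{\boldsymbol{\pi}\in\Delta^m}\left\langle-\boldsymbol{u}^t,\boldsymbol{\pi}\right\rangle+\lambda H(\boldsymbol{\pi})+\frac{1}{\eta}\,\mathrm{KL}\left(\boldsymbol{\pi}\,\|\,\boldsymbol{\pi}^t\right),
\end{equation}
whose Hessian is $\left(\tfrac{1}{\eta}-\lambda\right)\operatorname{diag}(1/\pi_1,\ldots,1/\pi_m)$. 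The KL term supplies the convex curvature that the entropy term destroys, and strict convexity (hence interiority and uniqueness, via the KKT conditions you wanted to use) holds exactly when $0<\eta\lambda<1$. The resulting closed form is a multiplicative update, $\pi_g^{t+1}\propto\left(\pi_g^t\right)^{1/(1-\eta\lambda)}\exp\left(u_g^t/\tau\right)$ with temperature $\tau=(1-\eta\lambda)/\eta$, which decreases as $\lambda$ grows --- this is how the correct temperature property (larger $\lambda$, sharper routing) emerges. Note also that the theorem's notation $\pi_g^{t+1}(\lambda)$ already signals an iteration-dependent solution; your static Gibbs formula $\pi_g\propto\exp(u_g/\lambda)$ has no dependence on $t$ or on $\boldsymbol{\pi}^t$, which should have been a warning that the intended problem contains a proximal anchor to the previous iterate.
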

  \begin{proof}
    To ensure consistency between the surrogate objective and the original objective, while maintaining descent during optimization, we further introduce a trust-region constraint based on KL divergence. At the $t$-th iteration, with step size $\eta>0$, the optimal routing is obtained by solving the following optimization problem:
    \begin{equation}
      \boldsymbol{\pi}^{t+1}=\argmin_{\boldsymbol{\pi} \in \Delta^m}\left\langle -\boldsymbol{u}^t, \boldsymbol{\pi}\right\rangle+\lambda H(\boldsymbol{\pi})+\frac{1}{\eta} \mathrm{KL}\left(\boldsymbol{\pi} \| \boldsymbol{\pi}^t\right).
    \end{equation}
    Write the optimization objective as:
    \begin{equation}
        J(\boldsymbol{\pi})=-\left\langle \boldsymbol{u}^t, \boldsymbol{\pi}\right\rangle-\lambda \sum_g {\pi}_g \log {\pi}_g+\frac{1}{\eta} \sum_g {\pi}_g \log \frac{{\pi}_g}{\pi_g^t},
    \end{equation}
    Since the Hessian matrix of $J(\boldsymbol{\pi})$ is positive definite in the interior of the feasible region, there exists a unique minimizer. We solve for the stationary point using the following Lagrangian function:
    \begin{equation}
    \mathcal{L}(\boldsymbol{\pi}, {\nu})=J(\boldsymbol{\pi})+{\nu}\left(\sum_g \pi_g-1\right)-\sum_g \mu_g \pi_g,
    \end{equation}
    where ${\nu}$ and $\mu_g$ are the multipliers of the equality and inequality constraints. Taking the partial derivative with respect to $\pi_g$ and setting it to zero yields the stationarity condition:
    \begin{equation}
      -u_g^t-\lambda\left(1+\log \pi_g\right)+\frac{1}{\eta}\left(1+\log \pi_g-\log \pi_g^t\right)+\nu=0 .
    \end{equation}
    Define $\tau:=\frac{1}{\eta}-\lambda >0$. Simplifying the stationarity condition gives:
    \begin{equation}
      \tau \log \pi_g=u_g^t+\frac{1}{\eta} \log \pi_g^t+B,
    \end{equation}
    where $B$ is a constant. Exponentiating and applying the normalization condition $\sum_g \pi_g=1$ yields
    \begin{equation}\label{eq: sovle}
      \pi_g^{t+1}=\frac{\left(\pi_g^t\right)^{\frac{1}{1-\eta \lambda}} \exp \left(\frac{\eta}{1-\eta \lambda} u_g^t\right)}{\sum_j\left(\pi_j^t\right)^{\frac{1}{1-\eta \lambda}} \exp \left(\frac{\eta}{1-\eta \lambda} u_j^t\right)}, \quad 0<\eta\lambda<1,
    \end{equation}
    Equivalently,
    \begin{equation}
      \pi_g^{t+1} \propto\left(\pi_g^t\right)^{1 /(1-\eta \lambda)} \cdot \exp \left(u_g^t / \tau\right), \quad \tau=\frac{1-\eta \lambda}{\eta}.
    \end{equation}
    The above optimal solution can be interpreted as \textbf{a softmax distribution with a base distribution $\boldsymbol{\pi}^t$ and a temperature parameter $\tau=(1-\eta\lambda)/\eta$}. Therefore, as $\lambda$ increases, $\tau$ decreases, and the routing distribution becomes sharper.
  \end{proof}
  
  \begin{definition}[$\epsilon$-soft top-$k$]
    \emph{Given a routing distribution $\boldsymbol{\pi}=[\pi_1,\cdots,\pi_m]\in\mathbb{R}^m$ and gain scores $\boldsymbol{u}=[u_1,\cdots,u_m]\in\mathbb{R}^m$. Let $\operatorname{Top}_k(\boldsymbol{u})$ be the index set of the top-$k$ elements in $\boldsymbol{u}$. If}
  \begin{equation}
    \sum_{i \notin \operatorname{Top}_k(\boldsymbol{u})} \pi_i \leq \epsilon,
  \end{equation}
  \emph{then $\boldsymbol{\pi}$ is said to be an $\epsilon$-soft top-$k$ with respect to $\boldsymbol{u}$.}
  \end{definition}
  
  \begin{corollary}[$\epsilon$-soft top-$k$ Approximation]
    \emph{There exists a threshold $\theta$ such that when $\lambda\geq \theta$, the entropy-driven routing mechanism approximates an $\epsilon$-soft top-$k$ routing.}
  \end{corollary}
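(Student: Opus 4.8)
The plan is to build directly on the closed-form optimal routing established in the theorem, namely
\begin{equation*}
\pi_g^{t+1} \propto \left(\pi_g^t\right)^{1/(1-\eta\lambda)}\exp\!\left(u_g^t/\tau\right), \qquad \tau = \frac{1-\eta\lambda}{\eta}.
\end{equation*}
First I would rewrite this single tempered softmax in a \emph{combined-score} form by absorbing the base distribution into the exponent: setting $m_g := \log\pi_g^t + \eta\,u_g^t$ and $\kappa := 1/(1-\eta\lambda)$, the solution becomes $\pi_g^{t+1} = \exp(\kappa m_g)\big/\sum_j \exp(\kappa m_j)$, an ordinary softmax over $m_g$ with inverse temperature $\kappa$. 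The virtue of this form is that the entire dependence on $\lambda$ is isolated in the scalar $\kappa$; since $0<\eta\lambda<1$, as $\lambda \uparrow 1/\eta$ we have $\kappa \uparrow +\infty$ (equivalently $\tau \downarrow 0$), and monotonicity of $\kappa$ in $\lambda$ is immediate, so increasing $\lambda$ strictly sharpens the distribution.

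Next I would derive a quantitative tail bound on the mass outside the top-$k$. Sorting the combined scores as $m_{(1)}\ge\cdots\ge m_{(m)}$ and writing $\delta := m_{(k)}-m_{(k+1)}$ for the $k$-th score gap, lower-bounding the normalizer by its top-$k$ terms gives the standard estimate
\begin{equation*}
\sum_{i\notin\operatorname{Top}_k(\boldsymbol{m})}\pi_i^{t+1} \;\le\; \frac{(m-k)\,e^{\kappa m_{(k+1)}}}{e^{\kappa m_{(k)}}} \;=\; (m-k)\,e^{-\kappa\delta}.
\end{equation*}
Requiring the right-hand side to be at most $\epsilon$ yields $\kappa \ge \frac{1}{\delta}\log\frac{m-k}{\epsilon}$, which I would invert through $\kappa = 1/(1-\eta\lambda)$ to obtain an explicit threshold
\begin{equation*}
\theta = \frac{1}{\eta}\left(1 - \frac{\delta}{\log\!\big((m-k)/\epsilon\big)}\right),
\end{equation*}
so that $\lambda \ge \theta$ forces the non-top-$k$ leakage below $\epsilon$, i.e.\ an $\epsilon$-soft top-$k$ in the sense of the preceding definition. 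Because $\kappa$ is unbounded as $\lambda \uparrow 1/\eta$, such a $\theta$ always exists strictly below $1/\eta$, keeping it inside the feasibility window $0<\eta\lambda<1$.

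The main obstacle is a subtle mismatch of indices: the concentration above is governed by the combined scores $m_g = \log\pi_g^t + \eta u_g^t$, whereas the $\epsilon$-soft top-$k$ definition measures leakage relative to $\operatorname{Top}_k(\boldsymbol{u})$, the top-$k$ of the \emph{raw} gains. The sharpening therefore certifies concentration on $\operatorname{Top}_k(\boldsymbol{m})$, and I must argue that this index set coincides with $\operatorname{Top}_k(\boldsymbol{u})$. I expect to handle this by invoking the natural regime in which the claim is made: a (near-)uniform base distribution $\boldsymbol{\pi}^t$, as holds at initialization or whenever the routing logits are comparable in magnitude, so that $\log\pi_g^t$ is approximately constant across experts and the ordering of $m_g$ reduces to that of $u_g$. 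More generally, I would state a gap condition guaranteeing $\operatorname{Top}_k(\boldsymbol{m})=\operatorname{Top}_k(\boldsymbol{u})$, namely that the scaled raw-gain gap $\eta\,(u_{(k)}-u_{(k+1)})$ dominates the spread of $\log\pi^t$ across the boundary experts; under either assumption the tail bound transfers verbatim to $\operatorname{Top}_k(\boldsymbol{u})$. Finally I would note the degenerate case $\delta=0$ (tied boundary scores), where the top-$k$ set is ambiguous and the statement should be read for any gap-admitting $k$.
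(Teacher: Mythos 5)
Your proposal is correct and follows the same core strategy as the paper's proof: both start from the closed-form tempered-softmax solution of Theorem 1, bound the probability mass outside the top-$k$ by an exponential in (score gap)$\times$(inverse temperature), and invert that bound to obtain an explicit threshold $\theta<1/\eta$. Two differences are worth noting. First, your repackaging of the update as a single softmax over combined scores $m_g=\log\pi_g^t+\eta u_g^t$ with inverse temperature $\kappa=1/(1-\eta\lambda)$ is algebraically cleaner than the paper's route, which instead bounds the ratios $\pi_j^{t+1}/\pi_k^{t+1}$ and, via the extra step $\sum_{i\le k}\pi_i^{t+1}\ge k\,\pi_k^{t+1}$, picks up a (harmless) factor of $1/k$, arriving at the bound $\frac{m-k}{k}\exp\bigl(-\frac{\eta}{1-\eta\lambda}\delta_k\bigr)$ and hence the threshold $\theta=\frac{1}{\eta}+\delta_k/\log\frac{k\epsilon}{m-k}$; your bound $(m-k)e^{-\kappa\delta}$ is marginally looser but yields the same conclusion. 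Second---and this is the more substantive point---the index mismatch you flag is real, and the paper's proof silently assumes it away: the paper sorts by the raw gains $u_1\ge\cdots\ge u_m$ and then asserts both $\pi_1^{t+1}\ge\cdots\ge\pi_m^{t+1}$ and $\pi_j^t/\pi_k^t\le 1$ for $j>k$, statements that hold only when the base distribution $\boldsymbol{\pi}^t$ is ordered consistently with $\boldsymbol{u}$ (e.g., uniform), which is exactly the regime you isolate. Your explicit gap condition guaranteeing $\operatorname{Top}_k(\boldsymbol{m})=\operatorname{Top}_k(\boldsymbol{u})$ therefore makes rigorous a step that the paper's argument needs but never states, and in that respect your write-up is the more careful of the two.
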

  \begin{proof}
    Let $u_1\geq \cdots \geq u_m$ be arranged in descending order, and define $\delta_k:=u_k - u_{k+1} >0 \ (1\leq k\leq m)$. From Eq.~(\ref{eq: sovle}), the ratio satisfies
  \begin{equation}
    \frac{\pi_{j}^{t+1}}{\pi_{k}^{t+1}} \leq\left(\frac{\pi_{j}^t}{\pi_{k}^t}\right)^{\frac{1}{1-\eta \lambda}} \exp \left(-\frac{\eta}{1-\eta \lambda} \delta_k\right) \quad (j>k).
  \end{equation}
  To achieve an approximate $\epsilon$-soft top-$k$ routing scheme, the optimal routing weights should concentrate as much as possible on the top-$k$ experts, while the total weight assigned to the remaining $m-k$ experts should be minimized.
  Since $\pi_{1}^{t+1}\geq\cdots\geq\pi_{m}^{t+1}$, we have
  \begin{equation}
    \begin{gathered}
      \sum_{j>k} \pi_j^{t+1} \leq \frac{m-k}{k} \exp \left(-\frac{\eta}{1-\eta \lambda} \delta_k\right)
      \end{gathered}
  \end{equation}
  Suppose there exists a small constant $\epsilon\in(0,1)$. As long as $\sum_{j>k} \pi_j^{t+1}\leq \epsilon$, the optimal routing distribution allocates at least a fraction $1-\epsilon$ of the probability mass to the top-$k$ experts, thereby making the entropy-driven routing mechanism approximate an $\epsilon$-soft top-$k$ routing. Hence, requiring
  \begin{equation}
     \frac{m-k}{k}\exp \left(-\frac{\eta}{1-\eta \lambda} \delta_k\right) \leq\epsilon,
  \end{equation}
  yields the condition for approximation. For this inequality to provide a meaningful lower bound on the threshold, we need $\frac{k\epsilon}{m-k}<1$, yielding
  \begin{equation}
  \lambda\geq\frac{1}{\eta}+\frac{\delta_k}{\log\frac{k\epsilon}{m-k}}.
  \end{equation}
  When $\theta=1/\eta+\delta_k/\log\frac{k\epsilon}{m-k}$, the entropy-driven routing mechanism approximates an $\epsilon$-soft top-$k$ routing.
\end{proof}

\section{Evaluations}\label{sec: experiment}
\subsection{Experimental Settings}
\subsubsection{Datasets}
We conduct extensive experiments on 12 benchmark datasets, including: 
1) Six homophilous datasets: Computers, Photo~\cite{mcauley2015image}, Coauthor CS, Coauthor Physics~\cite{Shchur2018PitfallsOG}, Facebook~\cite{rozemberczki20201R} and Ogbn-arxiv~\cite{hu2020open}; and
2) Six heterophilous datasets: Actor~\cite{tang2009S}, Squirrel-fix, Chameleon-fix, Tolokers, Roman-empire~\cite{critical} and Penn94~\cite{NEURIPS2021_ae816a80}.
For most datasets we use random splitting (48\%:32\%:20\% for training, validation and testing). For Ogbn-arxiv, we use the public splits in OGB~\cite{hu2020open}.
Refer to Appendix~\ref{app:datasets} for more dataset details.

\subsubsection{Baselines}
% We compare our methods with a total of 15 competitors in three categories, including classical methods such as two-layer MLP, GCN~\cite{kipf2016semi}, SAGE~\cite{hamilton2017inductive} and GAT~\cite{velivckovic2017gat}, spectral methods such as H2GCN~\cite{zhu2020h2gcn}, GPRGNN~\cite{chien2020adaptive}, FAGCN~\cite{bo2021fagcn} and ACM-GCN~\cite{luan2022acm}, spatial methods such as CPGNN\cite{zhu2021cpgnn}, WRGAT~\cite{suresh2021wrgat}, GBKSage~\cite{du2022gbk}, HOG~\cite{wang2022hog}, GloGNN~\cite{li2022glognn}, CAGNN~\cite{cagnn} and SNGNN~\cite{zou2023similarity}. The methods are the latest and tailor-made for the heterophily problem. We provide more details for these competitors in Appendix~\ref{app:baselines}.
We compare \model with a total of 20 baselines in four categories, including:
1) Vanilla model: MLP, GCN~\cite{GCN}, GAT~\cite{GAT} and GraphSAGE~\cite{GraphSAGE}; 
2) Heterophilous GNNs: LINKX~\cite{NEURIPS2021_ae816a80}, H2GCN~\cite{H2GCN2020}, GPRGNN~\cite{chien2020adaptive}, FAGCN~\cite{fagcn2021}, ACMGCN~\cite{luan2022revisiting}, GloGNN~\cite{li2022glognn} and FSGNN~\cite{MAURYA2022101695}; 
3) Graph Transformer (GT): vanilla GT, ANS-GT~\cite{ASN-GT}, NAGphormer~\cite{chennagphormer}, SGFormer~\cite{SGFormer}, Exphormer~\cite{exphormer} and Difformer~\cite{wu2023difformer};
4) Graph MoE: GMoE~\cite{wang2023graph}, DAMoE~\cite{YAO2026108064} and NodeMoE~\cite{han2024node}.
Refer to Appendix~\ref{app:baselines} for more details.

\subsubsection{Setup}
% To ensure the stability and reproducibility of the results, we use ten consecutive seeds to fix the data splitting and model initialization. For all methods, we set the maximum epochs to 500 with 100 epochs patience for early stopping, the hidden dimension to 512, and the optimizer to Adam~\cite{kingma2014adam}. For common parameters like learning rate, weight decay and dropout rate, we search in the same parameter space if their codes are publicly available. 
% The key parameters of NCGCN/NCSAGE like hop $k$ and threshold $T$ will be searched in their respective parameter spaces $k\in\{1, 2\}$ and $T\in\{0.3,0.4,0.5,0.6,0.7\}$. 
% Detailed parameter settings are reported in Appendix~\ref{app:parameter}.
We utilize 10 random seeds to fix the data splits and model initialization, and report the average accuracy and standard deviation over 10 runs.
For all methods, we set the search space of common parameters as follows:
maximum epochs to 500 with 100 patience,
hidden dimension $d^\prime$ to 64,
optimizer to AdamW,
learning rate in \{0.005, 0.01, 0.05, 0.1\},
dropout rate in \{0.1, 0.3, 0.5, 0.7, 0.9\}.
For \model, the number of MoE-blocks in \{3,4,5,6\} is searched for Ogbn-arxiv while a fixed value of 2 is used for other datasets.
For all baselines, we search the optimal parameters in the same parameter spaces.
Moreover, \model are implemented in PyTorch 1.11.0, Pytorch-Geometric 2.1.0 with CUDA 12.0 and Python 3.9.
All experiments are conducted at NVIDIA A100 40GB.
Refer to Appendix~\ref{app:parameter} for more parameter settings.

\subsection{Analysis}
We answer seven \textbf{R}esearch \textbf{Q}uestions (\textbf{RQ}) and demonstrate our arguments by extended experiments. 
% For simplicity, we only discuss the validity of our framework on \model(GCN-like \textbf{P}) in \emph{RQ2} - \emph{RQ5}.}

\subsubsection{\textbf{RQ1: Does GNNMoE surpass baselines across homophilous and heterophilous datasets?}}
Table~\ref{tab: main} reports node classification performance on 12 benchmarks, along with local rankings on homophilous and heterophilous graphs and global rankings across all graphs. Overall, \model achieves the best performance on 11 out of the 12 datasets, and its three backbone variants consistently attain the top overall ranks on all graphs, demonstrating powerful cross-graph generalization.

% In terms of baselines, heterophilous GNNs and Graph MoE models generally outperform vanilla GNNs. Notably, Graph MoE surpasses several specialized heterophilous graph models on multiple datasets, suggesting that the mixture-of-experts (MoE) mechanism can adaptively handle diverse structural and attribute patterns, effectively capturing node representations. 
Regarding baseline, heterophilous GNNs, GTs, and Graph MoE generally outperform vanilla GNNs. In particular, H2GCN, FSGNN, and GMoE stand out within their respective methodological families, which underscores the effectiveness of heterophilous message passing and MoE mechanisms.
By contrast, graph transformers exhibit relatively moderate performance and frequently encounter out-of-memory (OOM) issues on large graphs. This limitation arises from the quadratic computational complexity and high memory consumption introduced by global self-attention. Moreover, global modeling may inject additional noise in graphs with moderate or low homophily, thereby diminishing discriminative power.
By comparison, \model consistently outperforms all baselines, achieving relative performance gains of 4.63\% $\sim$ 20.47\% over vanilla GNNs, 2.69\% $\sim$ 12.06\% over heterogeneous GNNs, 3.14\% $\sim$ 7.25\% over GTs, and 2.51\% $\sim$ 10.45\% over Graph MoE. Moreover, \model does not encounter OOM on large-scale graphs, reflecting excellent scalability.

\begin{table*}[!htb]
  \centering
  \caption{Node classification results: average test accuracy (\%) $\pm$ standard deviation. ``Local Rank'' indicates the average performance ranking across homophilous or heterophilous datasets, ``Global Rank'' indicates the average performance ranking across all datasets. Boldface letters mark the best performance while underlined letters indicate the second best.}
  \renewcommand\arraystretch{1.3}
  \label{tab: main}
  \resizebox{\textwidth}{!}{
    \begin{tabular}{c|c|ccccccc|ccccccc|c} 
    \hline\hline
    \multicolumn{2}{l|}{\diagbox[width=14em]{Method}{Dataset}}                           & Computers             & Photo                 & \begin{tabular}[c]{@{}c@{}}Coauthor\\CS\end{tabular} & \begin{tabular}[c]{@{}c@{}}Coauthor\\Physics\end{tabular} & Facebook              & Ogbn-arxiv            & \begin{tabular}[c]{@{}c@{}}local\\rank\end{tabular}     & Actor                 & \begin{tabular}[c]{@{}c@{}}Chameleon\\-fix\end{tabular} & Squirrel-fix          & Tolokers              & \begin{tabular}[c]{@{}c@{}}Roman\\-empire\end{tabular} & Penn94                & \begin{tabular}[c]{@{}c@{}}local\\rank\end{tabular}     & \begin{tabular}[c]{@{}c@{}}global\\rank\end{tabular}     \\ 
    \hline
    \multirow{4}{*}{Vanilla}                      & MLP         & 85.01 $\pm$ 0.84          & 92.00 $\pm$ 0.56          & 94.80 $\pm$ 0.35                                         & 96.11 $\pm$ 0.14                                              & 76.86 $\pm$ 0.34          & 53.46 $\pm$ 0.35          & 19.67          & 37.14 $\pm$ 1.06          & 33.31 $\pm$ 2.32                                            & 34.47 $\pm$ 3.09          & 53.18 $\pm$ 6.35          & 65.98 $\pm$ 0.43                                           & 75.18 $\pm$ 0.35          & 18.33          & 19.00           \\
                                                  & GCN         & 91.17 $\pm$ 0.54          & 94.26 $\pm$ 0.59          & 93.40 $\pm$ 0.45                                         & 96.37 $\pm$ 0.20                                              & 93.98 $\pm$ 0.34          & 69.71 $\pm$ 0.18          & 16.00          & 30.65 $\pm$ 1.06          & 41.85 $\pm$ 3.22                                            & 33.89 $\pm$ 2.61          & 70.34 $\pm$ 1.64          & 50.76 $\pm$ 0.46                                           & 80.45 $\pm$ 0.27          & 18.83          & 17.42           \\
                                                  & GAT         & 91.44 $\pm$ 0.43          & 94.42 $\pm$ 0.61          & 93.20 $\pm$ 0.64                                         & 96.28 $\pm$ 0.31                                              & 94.03 $\pm$ 0.36          & 70.03 $\pm$ 0.42          & 14.83          & 30.58 $\pm$ 1.18          & 43.31 $\pm$ 3.42                                            & 36.27 $\pm$ 2.12          & 79.93 $\pm$ 0.77          & 57.34 $\pm$ 1.81                                           & 78.10 $\pm$ 1.28          & 17.33          & 16.08           \\
                                                  & GraphSAGE   & 90.94 $\pm$ 0.56          & 95.41 $\pm$ 0.45          & 94.17 $\pm$ 0.46                                         & 96.69 $\pm$ 0.23                                              & 93.72 $\pm$ 0.35          & 69.15 $\pm$ 0.18          & 14.00          & 37.60 $\pm$ 0.95          & 44.94 $\pm$ 3.67                                            & 36.61 $\pm$ 3.06          & 82.37 $\pm$ 0.64          & 77.77 $\pm$ 0.49                                           & OOM                       & 11.33          & 12.67           \\ 
  \hdashline      
    \multirow{7}{*}{Hetero}                       & H2GCN       & 91.69 $\pm$ 0.33          & 95.59 $\pm$ 0.48          & 95.62 $\pm$ 0.27                                         & 97.00 $\pm$ 0.16                                              & 94.36 $\pm$ 0.32          & OOM                       & 7.33          & 37.27 $\pm$ 1.27          & 43.09 $\pm$ 3.85                                            & 40.07 $\pm$ 2.73          & 81.34 $\pm$ 1.16          & 79.47 $\pm$ 0.43                                           & 75.91 $\pm$ 0.44          & 11.83          & 9.58           \\
                                                  & GPRGNN      & 91.80 $\pm$ 0.55          & 95.44 $\pm$ 0.33          & 95.17 $\pm$ 0.34                                         & 96.94 $\pm$ 0.20                                              & 94.84 $\pm$ 0.24          & 69.95 $\pm$ 0.19          & 8.00          & 36.89 $\pm$ 0.83          & 44.27 $\pm$ 5.23                                            & 40.58 $\pm$ 2.00          & 73.84 $\pm$ 1.40          & 67.72 $\pm$ 0.63                                           & 84.34 $\pm$ 0.29          & 12.00          & 10.00           \\
                                                  & FAGCN       & 89.54 $\pm$ 0.75          & 94.44 $\pm$ 0.62          & 94.93 $\pm$ 0.22                                         & 96.91 $\pm$ 0.27                                              & 91.90 $\pm$ 1.95          & 66.87 $\pm$ 1.48          & 15.67         & 37.59 $\pm$ 0.95          & 45.28 $\pm$ 4.33                                            & 41.05 $\pm$ 2.67          & 81.38 $\pm$ 1.34          & 75.83 $\pm$ 0.35                                           & 79.01 $\pm$ 1.09          & 9.17           & 12.42           \\
                                                  & ACMGCN      & 91.66 $\pm$ 0.78          & 95.42 $\pm$ 0.39          & 95.47 $\pm$ 0.33                                         & 97.00 $\pm$ 0.27                                              & 94.27 $\pm$ 0.33          & 69.98 $\pm$ 0.11          & 7.83          & 36.89 $\pm$ 1.13          & 43.99 $\pm$ 2.02                                            & 36.58 $\pm$ 2.75          & 83.52 $\pm$ 0.87          & 81.57 $\pm$ 0.35                                           & 83.01 $\pm$ 0.46          & 11.50          & 9.67           \\
                                                  & GloGNN      & 89.48 $\pm$ 0.63          & 94.34 $\pm$ 0.58          & 95.32 $\pm$ 0.29                                         & OOM                                                           & 84.57 $\pm$ 0.62          & OOM                       & 18.50         & 37.30 $\pm$ 1.41          & 41.46 $\pm$ 3.89                                            & 37.66 $\pm$ 2.12          & 58.74 $\pm$ 13.41         & 66.46 $\pm$ 0.41                                           & 85.33 $\pm$ 0.27          & 14.33          & 16.42           \\
                                                  & FSGNN       & 91.03 $\pm$ 0.56          & 95.50 $\pm$ 0.41          & 95.51 $\pm$ 0.32                                         & 96.98 $\pm$ 0.20                                              & 94.32 $\pm$ 0.32          & 71.09 $\pm$ 0.21          & 8.17          & 37.14 $\pm$ 1.06          & 45.79 $\pm$ 3.31                                            & 38.25 $\pm$ 2.62          & 83.87 $\pm$ 0.98          & 79.76 $\pm$ 0.41                                           & 83.87 $\pm$ 0.98          & 8.33           & 8.25           \\
                                                  & LINKX       & 90.75 $\pm$ 0.36          & 94.58 $\pm$ 0.56          & 95.52 $\pm$ 0.30                                         & 96.93 $\pm$ 0.16                                              & 93.84 $\pm$ 0.32          & 66.16 $\pm$ 0.27          & 12.33         & 31.17 $\pm$ 0.61          & 44.94 $\pm$ 3.08                                            & 38.40 $\pm$ 3.54          & 77.55 $\pm$ 0.80          & 61.36 $\pm$ 0.60                                           & 84.97 $\pm$ 0.46          & 13.50          & 12.92           \\ 
  \hdashline      
    \multirow{6}{*}{GT}                           & Vanilla GT  & 84.41 $\pm$ 0.72          & 91.58 $\pm$ 0.73          & 94.61 $\pm$ 0.30                                         & OOM                                                           & OOM                       & OOM                       & 20.67          & 37.08 $\pm$ 1.08          & 44.27 $\pm$ 3.98                                            & 39.55 $\pm$ 3.10          & 72.24 $\pm$ 1.17          & OOM                                                        & OOM                       & 15.50          & 18.08           \\
                                                  & ANS-GT      & 90.01 $\pm$ 0.38          & 94.51 $\pm$ 0.24          & 93.93 $\pm$ 0.23                                         & 96.28 $\pm$ 0.19                                              & 92.61 $\pm$ 0.16          & OOM                       & 17.50          & \uline{37.80 $\pm$ 0.95}  & 40.74 $\pm$ 2.26                                            & 36.65 $\pm$ 0.80          & 76.91 $\pm$ 0.85          & 80.36 $\pm$ 0.71                                           & OOM                       & 13.67          & 15.58           \\
                                                  & NAGFormer   & 90.22 $\pm$ 0.42          & 94.95 $\pm$ 0.52          & 94.96 $\pm$ 0.25                                         & 96.43 $\pm$ 0.20                                              & 93.35 $\pm$ 0.28          & 70.25 $\pm$ 0.13          & 13.83          & 36.99 $\pm$ 1.39          & 46.12 $\pm$ 2.25                                            & 38.31 $\pm$ 2.43          & 66.73 $\pm$ 1.18          & 75.92 $\pm$ 0.69                                           & 73.98 $\pm$ 0.53          & 13.67          & 13.75           \\
                                                  & SGFormer    & 90.70 $\pm$ 0.59          & 94.46 $\pm$ 0.49          & 95.21 $\pm$ 0.20                                         & 96.87 $\pm$ 0.18                                              & 86.66 $\pm$ 0.54          & 65.84 $\pm$ 0.24          & 15.33          & 36.59 $\pm$ 0.90          & 44.27 $\pm$ 3.68                                            & 38.83 $\pm$ 2.19          & 80.46 $\pm$ 0.91          & 76.41 $\pm$ 0.50                                           & 76.65 $\pm$ 0.49          & 13.50          & 14.42           \\
                                                  & Exphormer   & 91.46 $\pm$ 0.51          & 95.42 $\pm$ 0.26          & 95.62 $\pm$ 0.29                                         & 96.89 $\pm$ 0.20                                              & 93.88 $\pm$ 0.40          & 71.59 $\pm$ 0.24          & 8.83           & 36.83 $\pm$ 1.10          & 42.58 $\pm$ 3.24                                            & 36.19 $\pm$ 3.20          & 82.26 $\pm$ 0.41          & \textbf{87.55 $\pm$ 1.13}                                  & OOM                       & 13.67          & 11.25           \\
                                                  & Difformer   & 91.52 $\pm$ 0.55          & 95.41 $\pm$ 0.38          & 95.49 $\pm$ 0.26                                         & 96.98 $\pm$ 0.22                                              & 94.23 $\pm$ 0.47          & OOM                       & 10.33          & 36.73 $\pm$ 1.27          & 44.44 $\pm$ 3.20                                           & 40.45 $\pm$ 2.51           & 81.04 $\pm$ 4.16          & 78.97 $\pm$ 0.54                                           & OOM                        & 12.83          & 11.58           \\ 
   \hdashline      
    \multicolumn{1}{c|}{\multirow{3}{*}{\begin{tabular}[c]{@{}c@{}}Graph\\MoE\end{tabular}}}                    & GMoE        & 91.37 $\pm$ 0.49          & 94.51 $\pm$ 0.68          & 93.18 $\pm$ 0.58                     & 96.48 $\pm$ 0.23   & 94.90 $\pm$ 0.25           & 71.88 $\pm$ 0.32           & 11.67         & 33.78 $\pm$ 1.32          & 46.69 $\pm$ 3.55                                            & \uline{42.24 $\pm$ 2.45}  & 85.21 $\pm$ 0.40          & 84.78 $\pm$ 0.76                                           & 79.03 $\pm$ 0.78          & 7.00           & 9.33           \\
                                                  & DAMoE       & 91.57 $\pm$ 0.64          & 94.39 $\pm$ 0.53          & 93.42 $\pm$ 0.50                                         & 96.42 $\pm$ 0.28                                              & 94.96 $\pm$ 0.21          & 71.76 $\pm$ 0.15          & 11.83          & 28.76 $\pm$ 1.01          & 45.51 $\pm$ 2.80                                            & 41.08 $\pm$ 2.08          & 51.45 $\pm$ 1.07          & 81.92 $\pm$ 0.52                                           & 78.04 $\pm$ 0.58          & 13.00          & 12.42           \\
                                                  & NodeMoE     & \uline{91.87 $\pm$ 0.33}  & 95.63 $\pm$ 0.41          & OOM                                                      & OOM                                                           & 94.84 $\pm$ 0.28          & OOM                       & 12.33          & 36.28 $\pm$ 1.39          & 45.67 $\pm$ 4.54                                            & 40.49 $\pm$ 2.01          & 74.31 $\pm$ 0.87          & OOM                                                        & OOM                       & 14.83          & 13.58           \\ 
   \hdashline     
    \multirow{3}{*}{\textbf{GNNMoE}}              & GCN-like P  & \textbf{91.99 $\pm$ 0.42} & \textbf{95.82 $\pm$ 0.43} & \textbf{95.88 $\pm$ 0.26}       & \textbf{97.20 $\pm$ 0.13}      & 95.12 $\pm$ 0.26           & \uline{72.31 $\pm$ 0.27}  & \textbf{1.50}                                                         & 37.60 $\pm$ 1.75          & \textbf{47.98 $\pm$ 2.82}                                   & \textbf{42.67 $\pm$ 2.28}  & \textbf{85.32 $\pm$ 0.62}  & 85.09 $\pm$ 0.73                                           & \uline{85.35 $\pm$ 0.33}         & \textbf{2.00} & \textbf{1.75}  \\
                                                  & SAGE-like P & \uline{91.87 $\pm$ 0.44}  & 95.73 $\pm$ 0.24          & 95.72 $\pm$ 0.23                & \uline{97.16 $\pm$ 0.16}       & \uline{95.28 $\pm$ 0.26}   & 71.83 $\pm$ 0.18          & 2.67                                                                  & \textbf{38.04 $\pm$ 0.99} & \uline{47.75 $\pm$ 2.79}                                    & 41.78 $\pm$ 2.39          & 83.86 $\pm$ 0.79          & 86.02 $\pm$ 0.51                                           & \textbf{85.46 $\pm$ 0.27}  & \uline{2.50}         & \uline{2.58}           \\
                                                  & GAT-like P  & 91.66 $\pm$ 0.55          & \uline{95.78 $\pm$ 0.37}  & \uline{95.84 $\pm$ 0.33}        & \uline{97.16 $\pm$ 0.17}       & \textbf{95.30 $\pm$ 0.22}  & \textbf{72.54 $\pm$ 0.23} & \uline{2.33}                                                          & 37.53 $\pm$ 1.00          & 46.69 $\pm$ 3.77                                            & 41.12 $\pm$ 2.23          & \uline{85.29 $\pm$ 0.80}   & \uline{87.34 $\pm$ 0.62}                                   & 85.35 $\pm$ 0.34                  & 3.17   & 2.75           \\ 
    \hline\hline
     \end{tabular}}
\end{table*}

\begin{table*}[!htb]
  \centering
  \caption{Ablation studies on the key designs of GNNMoE.}
  \renewcommand\arraystretch{1.3}
  \label{tab: ablation}
  \resizebox{\textwidth}{!}{
    \begin{tabular}{c|c|ccccccc|ccccccc|c} 
    \hline\hline
    \multicolumn{2}{l|}{\diagbox[width=14em]{Method}{Dataset}}                           & Computers             & Photo                 & \begin{tabular}[c]{@{}c@{}}Coauthor\\CS\end{tabular} & \begin{tabular}[c]{@{}c@{}}Coauthor\\Physics\end{tabular} & Facebook              & Ogbn-arxiv            & \begin{tabular}[c]{@{}c@{}}local\\rank\end{tabular}     & Actor                 & \begin{tabular}[c]{@{}c@{}}Chameleon\\-fix\end{tabular} & Squirrel-fix          & Tolokers              & \begin{tabular}[c]{@{}c@{}}Roman\\-empire\end{tabular} & Penn94                & \begin{tabular}[c]{@{}c@{}}local\\rank\end{tabular}     & \begin{tabular}[c]{@{}c@{}}global\\rank\end{tabular}     \\ 
    \hline   
    \multirow{3}{*}{\textbf{GNNMoE}}              & GCN-like P  & \uline{91.99 $\pm$ 0.42}  & \textbf{95.82 $\pm$ 0.43} & \textbf{95.88 $\pm$ 0.26}       & \textbf{97.20 $\pm$ 0.13}      & 95.12 $\pm$ 0.26          & 72.31 $\pm$ 0.27          & \textbf{2.67} & 37.60 $\pm$ 1.75          & \textbf{47.98 $\pm$ 2.82}                                   & \uline{42.67 $\pm$ 2.28}  & \uline{85.32 $\pm$ 0.62}  & 85.09 $\pm$ 0.73                                           & \uline{85.35 $\pm$ 0.33}         & \textbf{2.67} & \textbf{2.67}  \\
                                                  & SAGE-like P & 91.87 $\pm$ 0.44          & 95.73 $\pm$ 0.24          & 95.72 $\pm$ 0.23                & \uline{97.16 $\pm$ 0.16}       & 95.28 $\pm$ 0.26          & 71.83 $\pm$ 0.18          & 5.00          & \textbf{38.04 $\pm$ 0.99} & \uline{47.75 $\pm$ 2.79}                                    & 41.78 $\pm$ 2.39          & 83.86 $\pm$ 0.79          & 86.02 $\pm$ 0.51                                           & \textbf{85.46 $\pm$ 0.27}        & \uline{3.33}  & 4.17           \\
                                                  & GAT-like P  & 91.66 $\pm$ 0.55          & 95.78 $\pm$ 0.37          & \uline{95.84 $\pm$ 0.33}        & \uline{97.16 $\pm$ 0.17}       & \uline{95.30 $\pm$ 0.22}  & \textbf{72.48 $\pm$ 0.23} & \uline{3.33}  & 37.53 $\pm$ 1.00          & 46.69 $\pm$ 3.77                                            & 41.12 $\pm$ 2.23          & 85.29 $\pm$ 0.80          & \textbf{87.34 $\pm$ 0.62}                                  & \uline{85.35 $\pm$ 0.34}         & 4.33          & \uline{3.83}           \\ 
     \hdashline      
    w/o $\operatorname{SR}(\cdot)$               & GCN-like P  & 91.65 $\pm$ 0.36          & 95.78 $\pm$ 0.29          & 95.83 $\pm$ 0.23             & 97.12 $\pm$ 0.18                  & 95.23 $\pm$ 0.31          & 70.68 $\pm$ 0.22          & 6.33          & 37.38 $\pm$ 0.96          & 47.25 $\pm$ 2.80                                            & 41.37 $\pm$ 2.17          & 85.29 $\pm$ 0.83          & 83.88 $\pm$ 0.62                                           & 84.65 $\pm$ 0.35                 & 6.17          & 6.25           \\
    w/o EFFN                                      & GCN-like P  & 91.74 $\pm$ 0.47          & 95.37 $\pm$ 0.34          & 95.39 $\pm$ 0.35             & 96.86 $\pm$ 0.21                  & 95.29 $\pm$ 0.27          & 71.92 $\pm$ 0.23          & 7.83          & 33.72 $\pm$ 1.33          & 46.52 $\pm$ 3.13                                            & 40.92 $\pm$ 2.28          & 83.17 $\pm$ 1.63          & 82.41 $\pm$ 0.38                                           & 84.04 $\pm$ 1.16                 & 9.33          & 8.58           \\
    w/o $\operatorname{HR}(\cdot)$               & SwishGLU    & 91.76 $\pm$ 0.30          & 95.31 $\pm$ 0.38          & 95.80 $\pm$ 0.24             & 97.13 $\pm$ 0.19                  & 94.98 $\pm$ 0.27          & 70.15 $\pm$ 0.42          & 8.17          & 35.35 $\pm$ 1.05          & 46.91 $\pm$ 3.99                                            & 42.27 $\pm$ 2.25          & 81.53 $\pm$ 0.97          & 77.84 $\pm$ 1.04                                           & 85.27 $\pm$ 0.47                 & 7.50          & 7.83           \\
    w/o ARes                                      & GCN-like P  & 91.85 $\pm$ 0.45          & 95.67 $\pm$ 0.36          & 95.73 $\pm$ 0.31             & 97.07 $\pm$ 0.26                  & 94.57 $\pm$ 0.41          & 71.76 $\pm$ 0.13          & 7.50          & 37.51 $\pm$ 0.99          & 45.15 $\pm$ 1.65                                            & 40.79 $\pm$ 3.10          & 84.38 $\pm$ 0.81          & 84.14 $\pm$ 0.96                                           & 79.62 $\pm$ 0.58                 & 8.67          & 8.08           \\ 
    \hdashline            
    \multirow{3}{*}{w/o $\mathcal{L}_\text{route}$}& GCN-like P  & \textbf{92.17 $\pm$ 0.50} & \uline{95.81 $\pm$ 0.41}  & 95.81 $\pm$ 0.26             & 97.03 $\pm$ 0.13                  & \textbf{95.53 $\pm$ 0.35}  & 72.29 $\pm$ 0.16         & 3.50          & 37.59 $\pm$ 1.36          & 47.19 $\pm$ 2.93                                            & \textbf{44.02 $\pm$ 2.59} & 84.77 $\pm$ 0.93          & 85.05 $\pm$ 0.55                                           & 84.61 $\pm$ 0.39                 & 4.67           & 4.08   \\
                                                  & SAGE-like P & 91.85 $\pm$ 0.39          & 95.46 $\pm$ 0.24          & 95.68 $\pm$ 0.24             & 96.81 $\pm$ 0.22                  & 94.63 $\pm$ 0.36          & 71.94 $\pm$ 0.25          & 8.17          & \uline{37.97 $\pm$ 1.01}  & 45.73 $\pm$ 3.19                                            & 39.19 $\pm$ 2.84          & 83.96 $\pm$ 0.75          & 86.00 $\pm$ 0.45                                           & 84.05 $\pm$ 0.37                 & 6.67           & 7.42          \\
                                                  & GAT-like P  & 91.98 $\pm$ 0.46          & 95.71 $\pm$ 0.37          & 95.72 $\pm$ 0.23             & 97.05 $\pm$ 0.19                  & 95.21 $\pm$ 0.25          & \uline{72.45 $\pm$ 0.32}  & 5.50          & 37.76 $\pm$ 0.98          & 45.56 $\pm$ 3.94                                            & 39.19 $\pm$ 2.38          & \textbf{85.45 $\pm$ 0.94} & \uline{87.29 $\pm$ 0.60}                                           & 81.98 $\pm$ 0.47         & 6.00           & 5.75           \\
    \hdashline          
    $\Delta \tau$                                       & GCN-like P  & 91.67 $\pm$ 0.50          & 95.37 $\pm$ 0.52          & 95.81 $\pm$ 0.24             & 97.15 $\pm$ 0.19                  & 95.21 $\pm$ 0.33          & 71.31 $\pm$ 0.11          & 6.83          & 36.80 $\pm$ 0.94          & 47.70 $\pm$ 3.44                                            & 42.20 $\pm$ 2.41          & 83.58 $\pm$ 1.03          & 84.94 $\pm$ 0.99                                           & 85.19 $\pm$ 0.37                 & 6.17          & 6.50           \\
    \hline\hline
    \end{tabular}}
\end{table*}

\subsubsection{\textbf{RQ2: Do the key designs in GNNMoE work?}}
The \model framework incorporates several key components, including a soft routing layer, an enhanced FFN with a hard routing layer (EFFN), an entropy-driven routing adapter and adaptive residual connections.
To further investigate the effectiveness of these designs, we conduct extensive ablation studies, as summarized in Table~\ref{tab: ablation}:
(1)
We first remove the soft routing layer in MoE-blocks and replace routing with simple mean aggregation over expert outputs (w/o $\operatorname{SR}(\cdot)$). We observe that the overall performance of the ablation model degrades by 0.46\% and 1.26\% on homophilous and heterophilous datasets respectively, indicating that expert selection via soft routing effectively modulates message-passing strategies and enhances encoding capacity across different scenarios.
(2)
We then remove the hard routing layer in EFFN and directly adopt SwishGLU activation as the FFN (w/o $\operatorname{HR}(\cdot)$). In this case, the overall performance of the ablation model drops by 0.69\% and 3.94\% on homophilous and heterophilous datasets respectively, suggesting that the expert routing within the hard routing layer effectively adapts activation patterns and strengthens expressiveness, particularly on heterophilous graphs. Furthermore, when we entirely remove EFFN (w/o EFFN), the additional decline in overall performance further underscores the contribution of EFFN to the expressive power of \model.
(3) 
Removing adaptive residual connections (w/o ARes) decreases overall performance, particularly on heterophilous graphs by 3.43\%, indicating that initial features enhance the discriminability of \model and that heterophilous nodes depend more strongly on their intrinsic attributes.
(4)
We further remove the entropy-driven routing adapter (w/o $\mathcal{L}_\text{route}$). Across different propagation operators, the ablation models consistently exhibit lower overall rankings compared to their corresponding full models, demonstrating that the entropy-driven adaptive routing mechanism effectively improves the cross-dataset generalization of \model.
(5) 
Finally, instead of introducing the entropy-driven routing adapter, we reshape the routing weight distribution by adjusting the temperature of the softmax in Eq.~(\ref{eq: pi}) ($\Delta \tau$). This alternative underperforms \model across all datasets, highlighting the superiority of our routing entropy regularization mechanism.

\subsubsection{\textbf{RQ3: How does the routing entropy regularization enhance the generalization of GNNMoE?}}
To further understand how the routing entropy regularization endows \model with flexibility and generalization when representing different nodes, we visualize the averaged expert-routing weight distributions assigned to all nodes on each dataset by the optimally tuned \model(GCN-like P) and its ablation model (w/o $\mathcal{L}_\text{route}$), as shown in Fig.~\ref{fig:weight-dis}. Note that only the weight distributions of the first two MoE-blocks are visualized. The observations are as follows:
(1) Without introducing the routing entropy regularization (w/o $\mathcal{L}_\text{route}$), the model already exhibits differentiated and structured routing weight distributions across different datasets. Specifically, in homophilous graphs, the weight distributions are sharper, with a few experts, particularly the propagation-related ones (\textsf{PP}), playing dominant roles, which indicates that propagation operations help optimize node representations in homophilous scenarios. Conversely, in heterophilous graphs, the weight distributions are relatively smooth, indicating that multiple experts collaborate and jointly contribute to the representation learning process.
(2) After introducing the routing entropy regularization, the routing weight distributions undergo varying degrees of adjustment. For example, in homophilous graphs such as Computers and Photo, the distributions become sharper, particularly in Block 2, where the weights are more concentrated. Conversely, in homophilous graphs like Coauthor CS and Coauthor Physics, the distributions in Block 1 become smoother. This demonstrates that the entropy-driven routing regularization can break conventional patterns (\eg overly sharp distributions in homophilous graphs) and promote more flexible expert-collaboration modes tailored to specific datasets, thereby improving GNNMoE's generalization. 
In heterophilous graphs, the regularization subtly refines the routing distribution of Block 1 (keeping it relatively smooth) while sharpening that of Block 2. The former highlights the necessity of multi-expert cooperation for heterophilous representation learning, whereas the latter indicates that eliminating suboptimal experts after collaboration helps prevent noise.

% \blue{These empirical findings further support our theoretical analysis in Section~\ref{sec:method}, which states that entropy regularization helps suppress suboptimal experts and facilitates more efficient expert selection.}
\begin{figure*}[!htb]
    \centering
    \includegraphics[width =\linewidth]{Fig5-weight-gcn.pdf}
    \caption{Visualization of routing weight distributions before and after introducing the routing entropy regularization mechanism (P: GCN-like P).}
    \label{fig:weight-dis}
\end{figure*}
\begin{figure*}[!htb]
    \centering
    \includegraphics[width =\linewidth]{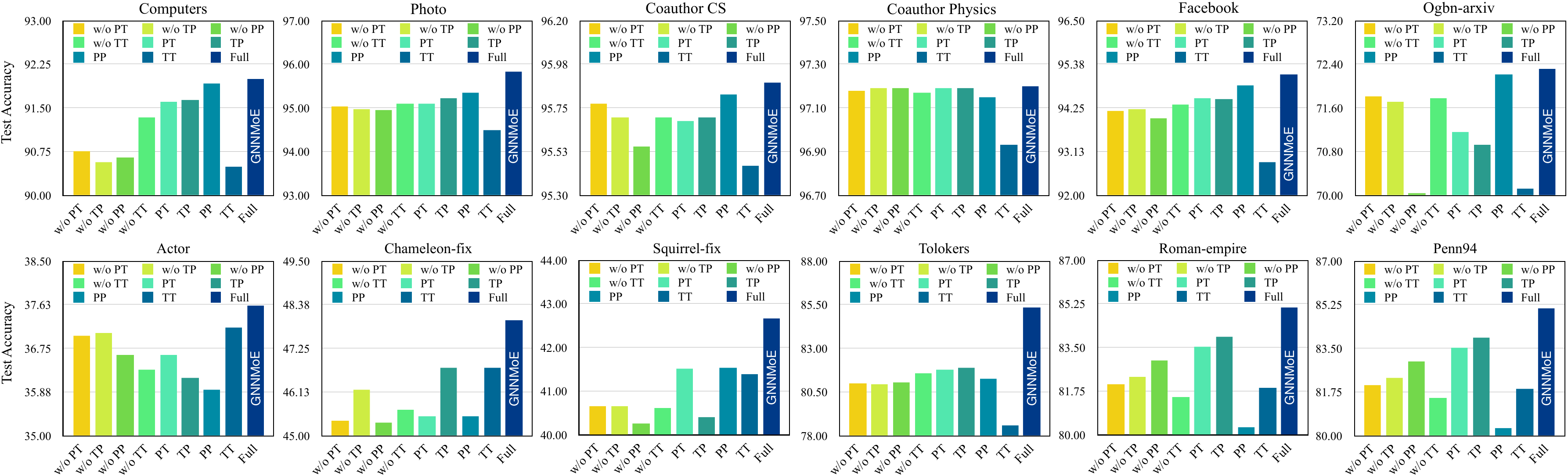}
    \caption{The impact of expert diversity on the performance of \model.}
    \label{fig: expert-diversity}
\end{figure*}

\subsubsection{\textbf{RQ4: How does expert diversity influence model performance?}}
In the expert network of \model, we design four distinct message-passing expert to adapt to diverse neighborhood structures and encoding requirements. To further investigate the impact of individual experts and expert diversity on model performance, we conduct several experiments: (1) removing any single expert from \model while keeping the remaining three; and (2) constructing an expert network using four identical experts. Fig.~\ref{fig: expert-diversity} summarizes the performance comparison across all datasets, from which several observations can be made:
(1)
Removing any expert or using only a single expert leads to performance degradation, with a more significant drop observed when key experts are removed. This indicates that each expert learns complementary latent features and collaborates effectively through adaptive routing;
(2)
On homophilous graphs, using only the key experts (with the same total number) can achieve performance close to that of \model, highlighting the preference for specialized encoding patterns in homophilous scenarios;
(3)
On heterophilous graphs, all variant models exhibit a more substantial performance drop, implying that the model relies more heavily on expert diversity to realize complementary collaboration among encoding patterns;
(4)
When expert diversity is limited, increasing the number of experts can yield comparable performance gains, as diversity in this case arises from parameter variations within otherwise identical experts.
Overall, by leveraging a diverse expert network coupled with a context-adaptive expert routing mechanism, \model demonstrates strong generalization across datasets.

\subsubsection{\textbf{RQ5: How do the entropy-driven routing mechanisms affect model performance?}}
To evaluate the effectiveness of the entropy-driven routing mechanism, we design several comparative experiments: (1) directly averaging the outputs of all experts (Mean); (2) aggregating information from the top-$k$ experts with the highest routing weights (Top-k); and (3) generating expert routing weights through learnable parameters, which can be viewed as a form of dot-product attention (Dot-Att). The results, summarized in Table~\ref{tab:gate}, show that our entropy-driven routing mechanism achieves consistently superior performance across most datasets, surpassing all alternative routing strategies. Specifically:
(1) Compared with the dot-product attention routing, \model avoids overfitting during node-level expert aggregation, thereby enhancing generalization;
(2) Compared with the mean routing, the entropy-driven routing mechanism enables adaptive expert selection, effectively reducing noise interference;
(3) Compared with the Top-k routing, the entropy-driven routing mechanism requires no pre-defined hyperparameter $k$. Instead, it adaptively adjusts the routing weight distribution to emulate soft top-$k$ routing, significantly lowering the parameter scale while maintaining powerful performance.
In summary, the entropy-driven routing mechanism offers substantial advantages in improving model generalization, robustness, and efficiency.

\begin{table*}
  \centering
  \caption{Impact of routing mechanism on the performance of \model.}
  \label{tab:gate}
  \renewcommand\arraystretch{1.2}
  \resizebox{\textwidth}{!}{
    \begin{tabular}{c|c|ccccccc|ccccccc|c} 
    \hline\hline
    \multicolumn{2}{l|}{\diagbox[width=14em]{Method}{Dataset}}                                                     & Computers             & Photo                 & \begin{tabular}[c]{@{}c@{}}Coauthor\\CS\end{tabular} & \begin{tabular}[c]{@{}c@{}}Coauthor\\Physics\end{tabular} & Facebook              & Ogbn-arxiv               & \begin{tabular}[c]{@{}c@{}}local\\rank\end{tabular}           & Actor                 & \begin{tabular}[c]{@{}c@{}}Chameleon\\-fix\end{tabular} & Squirrel-fix          & Tolokers              & \begin{tabular}[c]{@{}c@{}}Roman\\-empire\end{tabular} & Penn94                 & \begin{tabular}[c]{@{}c@{}}local\\rank\end{tabular}  & \begin{tabular}[c]{@{}c@{}}global\\rank\end{tabular}\\ 
    \hline                                                    
    \multirow{3}{*}{\begin{tabular}[c]{@{}c@{}}GNNMoE\\Dot-Att\end{tabular}} & GCN-like P  & 90.92 ± 0.52          & 94.73 ± 0.53          & 95.46 ± 0.30                                         & 96.90 ± 0.16                                              & 93.93 ± 0.36          & 70.13 ± 0.17             & 10.83                                                         & 36.70 ± 0.82          & 46.24 ± 4.36                                            & 40.92 ± 2.87          & 73.79 ± 1.13          & 65.95 ± 0.46                                           & 84.54 ± 0.38           & 9.33 & 10.08 \\
                                                                                         & SAGE-like P & 90.98 ± 0.52          & 94.73 ± 0.41          & 95.46 ± 0.28                                         & 96.91 ± 0.16                                              & 93.86 ± 0.33          & 69.25 ± 0.26             & 11.00                                                         & 36.83 ± 0.67          & 46.12 ± 3.07                                            & 41.46 ± 2.14          & 73.62 ± 1.05          & 65.89 ± 0.44                                           & 84.59 ± 0.43           & 8.67 & 9.83 \\
                                                                                         & GAT-like P  & 90.93 ± 0.36          & 94.75 ± 0.41          & 95.49 ± 0.32                                         & 96.91 ± 0.17                                              & 93.89 ± 0.27          & 70.32 ± 0.31             & 10.00                                                         & 36.41 ± 1.37          & 46.12 ± 4.39                                            & 41.46 ± 2.49          & 73.42 ± 0.83          & 65.98 ± 0.39                                           & 84.50 ± 0.60           & 9.17 & 9.58 \\ 
    \hline                                                            
    \multirow{3}{*}{\begin{tabular}[c]{@{}c@{}}GNNMoE\\Mean\end{tabular}}                & GCN-like P  & \uline{92.15 ± 0.36}  & 95.78 ± 0.29          & 95.83 ± 0.23                                         & 97.12 ± 0.18                                              & 95.23 ± 0.31          & 70.68 ± 0.22             & 4.00                                                          & 37.38 ± 0.96          & 47.25 ± 2.80                                            & 41.37 ± 2.17          & 85.29 ± 0.83          & 83.88 ± 0.62                                           & 84.65 ± 0.35           & 5.17 & 4.58 \\
                                                                                         & SAGE-like P & 91.85 ± 0.44          & 95.61 ± 0.48          & 95.53 ± 0.25                                         & 96.96 ± 0.24                                              & 94.92 ± 0.22          & 69.74 ± 0.27             & 8.67                                                          & \uline{37.87 ± 1.27}  & 45.17 ± 4.31                                            & 39.03 ± 2.16          & 83.76 ± 1.14          & 85.63 ± 0.63                                           & 83.91 ± 0.35           & 8.00 & 8.33 \\
                                                                                         & GAT-like P  & 91.53 ± 0.52          & 95.72 ± 0.46          & 95.74 ± 0.31                                         & 97.03 ± 0.22                                              & 95.08 ± 0.37          & 71.52 ± 0.15             & 7.17                                                          & 37.33 ± 1.14          & 44.66 ± 3.25                                            & 39.48 ± 2.45          & 85.24 ± 0.75          & 85.32 ± 0.61                                           & 83.16 ± 0.52           & 8.33 & 7.75 \\ 
    \hline                                                            
    \multirow{3}{*}{\begin{tabular}[c]{@{}c@{}}GNNMoE\\Top-k\end{tabular}}               & GCN-like P  & \textbf{92.15 ± 0.35} & 95.78 ± 0.27          & 95.82 ± 0.29                                         & 97.18 ± 0.12                                              & 95.09 ± 0.27          & 70.57 ± 0.22             & 4.17                                                          & 37.28 ± 1.36          & 47.08 ± 3.72                                            & 41.55 ± 2.46          & 84.80 ± 0.70          & 84.72 ± 0.69                                           & 84.81 ± 0.33           & 5.50 & 4.83 \\
                                                                                         & SAGE-like P & 91.80 ± 0.46          & 95.71 ± 0.32          & 95.64 ± 0.26                                         & 97.12 ± 0.17                                              & 95.20 ± 0.26          & 69.48 ± 0.17             & 7.50                                                          & 37.49 ± 1.00          & 46.46 ± 3.24                                            & 40.43 ± 2.55          & 83.89 ± 0.71          & 85.20 ± 0.85                                           & 84.07 ± 0.39           & 7.33 & 7.42 \\
                                                                                         & GAT-like P  & 91.87 ± 0.33          & \uline{95.81 ± 0.46}  & \uline{95.84 ± 0.21}                                 & \uline{97.13 ± 0.17}                                      & 95.26 ± 0.25          & 71.61 ± 0.15             & 3.33                                                          & 37.05 ± 0.95          & 46.57 ± 2.40                                            & 41.06 ± 3.17          & 85.23 ± 0.79          & 85.14 ± 0.62                                           & 84.19 ± 0.27           & 7.17 & 5.25 \\ 
    \hline                                                            
    \multirow{3}{*}{GNNMoE}                                                              & GCN-like P  & 91.99 ± 0.42          & \textbf{95.82 ± 0.43} & \textbf{95.88 ± 0.26}                                & \textbf{97.20 ± 0.13}                                     & 95.12 ± 0.26          & \uline{72.31 ± 0.27}     & \textbf{2.33}                                                 & 37.60 ± 1.75          & \textbf{47.98 ± 2.82}                                   & \textbf{42.67 ± 2.28} & \textbf{85.32 ± 0.62} & 85.09 ± 0.73                                           & \uline{85.35 ± 0.33}   & \textbf{2.50} & \textbf{2.42} \\
                                                                                         & SAGE-like P & 91.87 ± 0.44          & 95.73 ± 0.24          & 95.72 ± 0.23                                         & 97.16 ± 0.16                                              & \uline{95.28 ± 0.26}  & 71.83 ± 0.18             & 4.17                                                          & \textbf{38.04 ± 0.99} & \uline{47.75 ± 2.79}                                    & \uline{41.78 ± 2.39}  & 83.86 ± 0.79          & \uline{86.02 ± 0.51}                                   & \textbf{85.46 ± 0.27}  & \uline{2.67}  & 3.42 \\
                                                                                         & GAT-like P  & 91.66 ± 0.55          & 95.78 ± 0.37          & 95.84 ± 0.33                                         & 97.16 ± 0.17                                              & \textbf{95.30 ± 0.22} & \textbf{72.48 ± 0.23}    & \uline{3.00}                                                  & 37.53 ± 1.00          & 46.69 ± 3.77                                            & 41.12 ± 2.23          & \uline{85.29 ± 0.80}  & \textbf{87.34 ± 0.62}                                  & 85.35 ± 0.34           & 3.50          & \uline{3.25} \\
    \hline
    \hline
    \end{tabular}
  }
\end{table*}

\begin{figure*}[!ht]
    \centering
    \includegraphics[width = \linewidth]{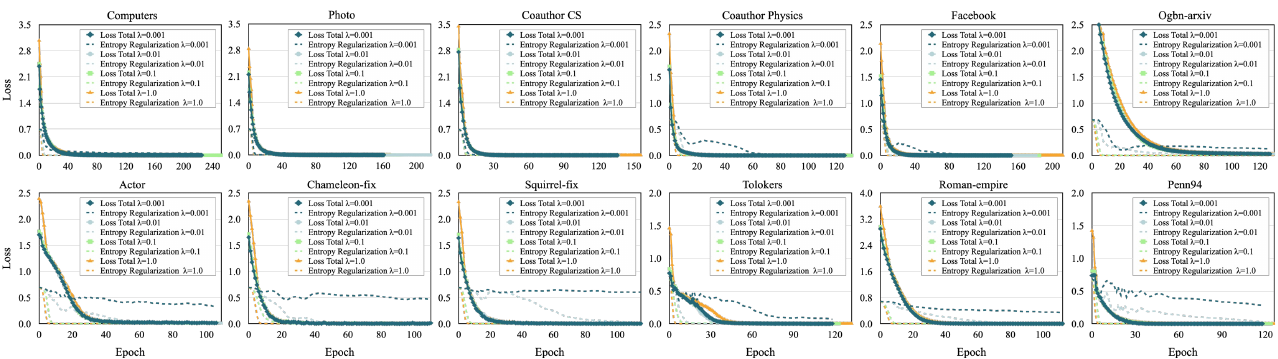}
    \caption{Visualization of training loss curves under different regularization coefficients $\lambda$.}
    \label{fig: loss}
\end{figure*}

\begin{figure}[htbp]
  \centering
  \begin{minipage}[b]{0.45\linewidth}
      \centering
      \includegraphics[width=\textwidth, page=1]{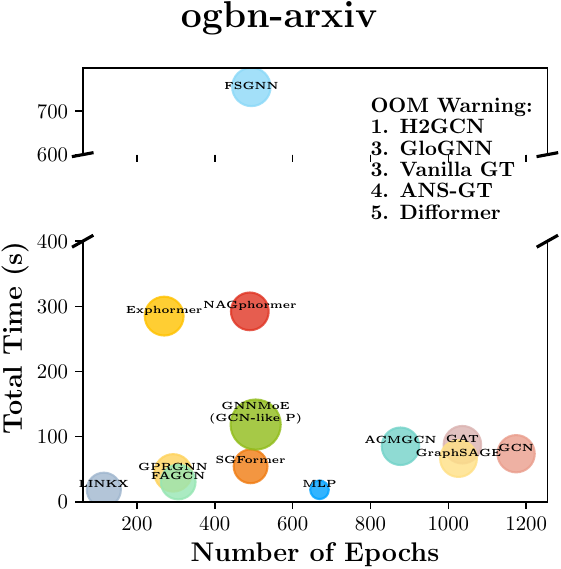}
  \end{minipage}
  \hspace{0.2cm}
  \begin{minipage}[b]{0.45\linewidth}
      \centering
      \includegraphics[width=\textwidth, page=1]{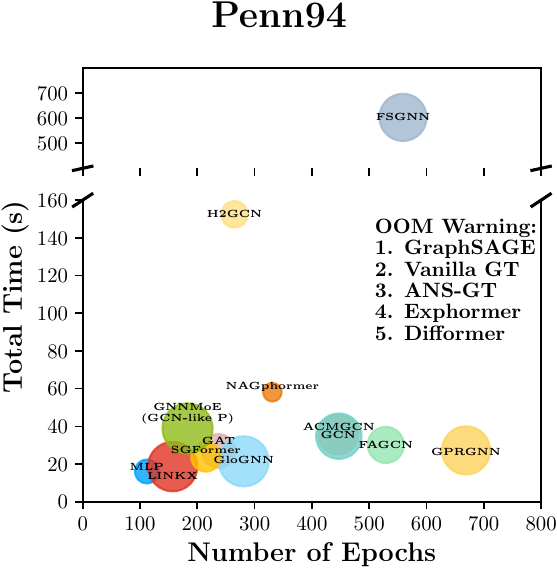}
  \end{minipage}
  \caption{Efficiency analysis on Ogbn-arxiv and Penn94.}
  \label{fig: time}
\end{figure}

\subsubsection{{\textbf{RQ6: Can GNNMoE, which incorporates both soft and hard routing mechanisms, be trained stably?}}}\label{sec: stab-training}
To further investigate the training stability of \model, we plot the training loss curves across all datasets, as shown in Fig.~\ref{fig: loss}. The results show that the loss decreases rapidly and converges steadily in the early training stage, with no evident oscillations or convergence bottlenecks throughout the process, highlighting the stability of model training.
Specifically, the soft routing mechanism in \model facilitates sufficient information fusion among experts in the early stages of training, thereby enhancing the model's expressive power. Meanwhile, the hard routing mechanism reinforces the determinism of expert activation, effectively suppressing redundant activations and noise interference. As training progresses, the routing entropy regularization gradually guides the adjustment of routing weight distributions. Notably, in homophilous graphs, regardless of the value of $\lambda$, the regularization loss converges toward zero, sharpening the routing distribution and realizing soft top-$k$ routing. In heterophilous graphs, when $\lambda$ is relatively small, the regularization loss converges to a nonzero value, maintaining smoother routing distributions and realizing soft weighting routing.
Overall, during training, the soft routing enables continuous optimization, the routing entropy regularization refines the allocation of routing weights, and the hard routing ensures precise expert selection and effective noise suppression. The synergy among these components allows \model to achieve more stable training while maintaining adaptive representational capability, ultimately driving continuous improvements in overall performance.

% \subsubsection{Efficiency Analysis}
\subsubsection{{\textbf{RQ7: How efficient is GNNMoE?}}}
Fig.~\ref{fig: time} illustrates the training efficiency and accuracy of \model compared with several representative methods on two large-scale datasets, Ogbn-arxiv and Penn94. The x-axis denotes the number of training epochs required to trigger early stopping, while the y-axis represents the total training time. The bubble size reflects model performance.
As shown, compared with spatial-domain GNNs such as FSGNN and several GT methods, \model achieves leading performance while reducing training time by approximately 2 to 7 times. Moreover, relative to vanilla GNNs and most spatial-domain GNNs, \model requires fewer training epochs, exhibiting a much faster convergence rate.
Overall, \model achieves high performance while maintaining excellent computational efficiency.

\section{Related Work}
\subsection{Shared-weight Paradigm}
The shared-weight paradigm represents the dominant design philosophy in GNNs, founded on the assumption that all nodes across a graph share an identical encoding mechanism. This paradigm performs end-to-end learning through globally unified feature transformation and neighborhood aggregation parameters, exemplified by the classic evolution from GCN to GraphSAGE and GAT. In spectral methods, GPRGNN achieves multi-scale feature aggregation through shared and learnable PageRank weights, while ACMGCN and FAGCN employ shared filter kernels across multiple channels to capture structural diversity. In spatial methods, approaches such as LINKX and GloGNN incorporate global structure or node correlation information, yet their aggregation weights remain globally shared. These GNNs indirectly adjust aggregation weights to accommodate structural variations, while the encoding function remains uniform across the entire graph. Recent graph transformer models, such as ANS-GT, NAGFormer, SGFormer, Exphormer, and Difformer, replace local convolution with global attention to capture long-range dependencies. However, their attention mechanisms are still uniformly parameterized, limiting the ability to distinguish heterophilous node features and often leading to noise accumulation in low-homophily scenarios. Overall, the shared-weight paradigm offers advantages in parameter efficiency and training stability, yet lacks the adaptivity required to capture structural and semantic diversity at the node level.

\subsection{Separated-Weight Paradigm}
The separated-weight paradigm aims to break the constraints of global parameter sharing by assigning distinct encoding weights to different subspaces or nodes according to their structural and semantic differences, thereby enabling finer-grained adaptive representation. Representative models such as NCGNN~\cite{NCGNN} quantify node heterophily via neighborhood confusion metric and partition nodes into distinct convolutional channels for coarse-grained separated encoding. GMoE treats multi-scale convolutions as parallel experts, each with independent parameters, and employs auxiliary regularization to ensure expert diversity, achieving structure-aware fine-grained separation. 
% MOWST constructs a hierarchical expert architecture that decouples lightweight MLPs and GNNs, dynamically allocating computation through a confidence-based mechanism, thus embodying a partially learnable separation strategy. 
These approaches exhibit clear advantages in parameter independence and node-specific modeling over shared-weight designs, but most still rely on coarse-grained grouping or static routing. Consequently, achieving a principled balance between parameter sharing and separation through a dynamic mechanism for adaptive allocation has become a key direction in graph representation learning, forming the central motivation for the proposed \model framework.

\section{Conclusions and Outlook}
This work explored the challenge of adaptive node representation under varying degrees of homophily and heterophily and proposed \model, an entropy-driven mixture of message-passing experts framework that unifies soft and hard routing in a flexible, learnable manner. The framework advances GNNs by enabling each node to select an optimal encoding path, achieving personalized representation and improved robustness across various graphs. Beyond its empirical superiority, \model offers a conceptual bridge between message-passing and expert-routing paradigms, suggesting a new direction for scalable, interpretable graph learning. Future research may extend this framework to more complex graphs and explore theoretical bounds of entropy-driven routing. Together, these findings point toward a broader principle: adaptivity and specialization are not competing goals but complementary forces for generalizable graph intelligence.

\bibliographystyle{IEEEtran}
\bibliography{Reference,IEEEabrv}

\newpage
% \mbox{}
% \newpage

\appendices
% 重定义附录二级标题
\renewcommand{\thesubsection}{\thesection-\arabic{subsection}}
\renewcommand{\thesubsectiondis}{\thesection-\arabic{subsection}}
\renewcommand{\thesubsubsection}{\thesubsection-\arabic{subsubsection}}
\renewcommand{\thesubsubsectiondis}{\thesubsection-\arabic{subsubsection}}

\setcounter{theorem}{0}% ——在附录开头清零
\setcounter{corollary}{0}
% 如果 lemma/corollary 等是独立计数器，也在这里各自清零

\section{Theoretical Proof}\label{app: proof}
\subsection{Theorem 1}

\begin{theorem}[\textbf{Temperature property of entropy-driven routing}]
  \emph{Suppose there are $m\ge 2$ message-passing experts and the routing weight distribution over experts is $\boldsymbol{\pi}=[\pi_1,\ldots,\pi_m]\in\Delta^{m}$, where feasible region
  $\Delta^{m}:=\{\boldsymbol{\pi}\in\mathbb{R}^m, \pi_g\ge 0, \sum_{g=1}^m \pi_g=1\}$.
  For a node $v_i$ at a given MoE-block, each expert has an instantaneous gain $u_g\in\mathbb{R}$ and $\boldsymbol{u}=[u_1, \ldots, u_m] \in \mathbb{R}^m$. Consider the following routing optimization problrm:} 
  \begin{equation}
    \min _{\boldsymbol{\pi} \in \Delta^m}\langle -\boldsymbol{u}, \boldsymbol{\pi}\rangle+\lambda H(\boldsymbol{\pi}) ,
  \end{equation}
  \emph{where $\lambda>0$ is the entropy regularization coefficient. Then the optimal routing $\pi_g^{t+1}(\lambda)$ has a unique closed-form solution.}
\end{theorem}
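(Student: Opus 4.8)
The plan is to attack the stated problem $\min_{\boldsymbol{\pi}\in\Delta^m}\langle-\boldsymbol{u},\boldsymbol{\pi}\rangle+\lambda H(\boldsymbol{\pi})$ head-on by Lagrangian duality on the probability simplex, with no auxiliary trust-region term: the entropy regularizer already supplies enough curvature for the first-order conditions to determine a single point. First I would record the structural facts that make the problem well posed: $\Delta^m$ is compact and convex, the linear term $\langle-\boldsymbol{u},\boldsymbol{\pi}\rangle$ is smooth, and the entropy term contributes a diagonal Hessian with entries proportional to $1/\pi_g$, so that $F(\boldsymbol{\pi})=\langle-\boldsymbol{u},\boldsymbol{\pi}\rangle+\lambda H(\boldsymbol{\pi})$ is strictly convex on the relative interior of the simplex. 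Existence of a minimizer follows from compactness, and strict convexity upgrades this to uniqueness; this is the step that justifies reducing the whole theorem to locating one stationary point.

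With well-posedness secured, I would introduce a multiplier $\nu$ for the normalization constraint $\sum_g\pi_g=1$ and multipliers $\mu_g\ge 0$ for the sign constraints $\pi_g\ge 0$, forming $\mathcal{L}(\boldsymbol{\pi},\nu,\boldsymbol{\mu})=\langle-\boldsymbol{u},\boldsymbol{\pi}\rangle+\lambda H(\boldsymbol{\pi})+\nu\big(\sum_g\pi_g-1\big)-\sum_g\mu_g\pi_g$. Setting $\partial\mathcal{L}/\partial\pi_g=0$ yields a stationarity condition in which the $\log\pi_g$ produced by differentiating the entropy is the decisive term; solving it for $\pi_g$ gives the Gibbs form $\pi_g\propto\exp(u_g/\lambda)$. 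Imposing normalization then eliminates $\nu$ and produces the closed form $\pi_g^{t+1}(\lambda)=\exp(u_g/\lambda)\big/\sum_j\exp(u_j/\lambda)$, a softmax over the instantaneous gains with temperature $\lambda$, which exhibits exactly the advertised $\lambda$-dependence.

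To convert this formal computation into a proof of optimality and uniqueness, I would verify the full KKT system. The key observation is that the candidate is strictly positive in every coordinate, so all sign constraints are inactive; complementary slackness then forces $\mu_g=0$, and the remaining stationarity and feasibility conditions are satisfied by construction. Together with the strict convexity established at the outset, this certifies that the interior stationary point is the unique global minimizer, so the softmax is not merely a critical point but the solution.

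I expect the main obstacle to be the boundary behaviour of the simplex: because the entropy gradient blows up as any $\pi_g\to 0$, I must argue that the optimizer lies in the relative interior rather than on a face where some coordinate vanishes, so that the stationarity equation is legitimately applied at the optimum. Making the existence-plus-strict-convexity argument airtight, and confirming that the inequality multipliers indeed vanish, is precisely the step that transforms the Lagrangian manipulation into a rigorous uniqueness and closed-form result.
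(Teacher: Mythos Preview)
Your plan has a sign error that is fatal to the approach. In the paper, $H(\boldsymbol{\pi})=-\sum_g\pi_g\log\pi_g$ is the ordinary Shannon entropy, which is \emph{concave}; its Hessian is $-\operatorname{diag}(1/\pi_g)$, not $+\operatorname{diag}(1/\pi_g)$. Hence the objective $F(\boldsymbol{\pi})=\langle-\boldsymbol{u},\boldsymbol{\pi}\rangle+\lambda H(\boldsymbol{\pi})$ is strictly \emph{concave} on the relative interior of $\Delta^m$, not strictly convex. Your Lagrangian computation does locate the unique interior stationary point, but for a concave function this point is the global \emph{maximizer}, not the minimizer; the minimum of a concave function over the simplex is attained at a vertex (a one-hot routing), and in general it is not unique. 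Consequently neither the uniqueness claim nor the softmax closed form survives. There is also a sign slip in the formula itself: differentiating $-\lambda\sum_g\pi_g\log\pi_g$ gives $-\lambda(1+\log\pi_g)$, so the stationary point satisfies $\pi_g\propto\exp(-u_g/\lambda)$, not $\exp(u_g/\lambda)$.

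The paper is aware of this obstruction, which is precisely why its proof does not attack the stated problem directly. It augments the surrogate with a KL trust-region penalty $\tfrac{1}{\eta}\mathrm{KL}(\boldsymbol{\pi}\Vert\boldsymbol{\pi}^t)$ around the current iterate. The KL term contributes a positive-definite Hessian $\tfrac{1}{\eta}\operatorname{diag}(1/\pi_g)$, so the combined Hessian is $(\tfrac{1}{\eta}-\lambda)\operatorname{diag}(1/\pi_g)$, which is positive definite exactly when $0<\eta\lambda<1$. Only after this modification does the Lagrangian/KKT argument you outline become valid, and it yields the tempered update $\pi_g^{t+1}\propto(\pi_g^t)^{1/(1-\eta\lambda)}\exp\!\big(u_g^t/\tau\big)$ with effective temperature $\tau=(1-\eta\lambda)/\eta$. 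The superscript $t+1$ in the theorem's conclusion is the hint that an iterative, trust-region viewpoint is intended; without that extra convex term, the ``entropy already supplies enough curvature'' intuition points in the wrong direction.
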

\begin{proof}
  To ensure consistency between the surrogate objective and the original objective, while maintaining descent during optimization, we further introduce a trust-region constraint based on KL divergence. At the $t$-th iteration, with step size $\eta>0$, the optimal routing is obtained by solving the following optimization problem:
  \begin{equation}
    \boldsymbol{\pi}^{t+1}=\argmin_{\boldsymbol{\pi} \in \Delta^m}\left\langle -\boldsymbol{u}^t, \boldsymbol{\pi}\right\rangle+\lambda H(\boldsymbol{\pi})+\frac{1}{\eta} \mathrm{KL}\left(\boldsymbol{\pi} \| \boldsymbol{\pi}^t\right).
  \end{equation}
  Write the optimization objective as:
  \begin{equation}
    \begin{aligned}
      J(\boldsymbol{\pi})=-\left\langle \boldsymbol{u}^t, \boldsymbol{\pi}\right\rangle&-\lambda \sum_g {\pi}_g \log {\pi}_g+\frac{1}{\eta} \sum_g {\pi}_g \log \frac{{\pi}_g}{\pi_g^t},\\
      &\sum_g \pi_g=1, \ \pi_g \geq 0.
    \end{aligned}
  \end{equation}
  To establish the existence of a unique minimizer, it is equivalent to proving that the Hessian matrix of $J(\boldsymbol{\pi})$ is positive definite in the interior of the feasible region. The Hessian matrix can be written as:
  % \begin{equation}
  %   \frac{\partial^2 {J}}{\partial \pi_g^2} =\left(\frac{1}{\eta}-\lambda \right)\cdot \frac{1}{\pi_g}, \qquad \frac{\partial^2 {J}}{\partial \pi_g \partial \pi_h}=0\ (g \neq h),
  % \end{equation}
  % thus
  \begin{equation}
    \nabla^2 {J}(\boldsymbol{\pi})=\left(\frac{1}{\eta}-\lambda \right) \cdot \operatorname{diag}\left(\frac{1}{\pi_1}, \ldots, \frac{1}{\pi_m}\right).
  \end{equation}
  When $0<\eta\lambda<1$, the Hessian matrix is positive definite in the interior, hence $J(\boldsymbol{\pi})$ is strictly convex with a unique minimizer. Since $\pi_g\to 0^+$ drives $J(\boldsymbol{\pi})\to +\infty$, the minimizer must lie strictly in the interior, \ie $\pi_g>0$. Next, we solve for the stationary point using the following Lagrangian function:
  \begin{equation}
  \mathcal{L}(\boldsymbol{\pi}, {\nu})=J(\boldsymbol{\pi})+{\nu}\left(\sum_g \pi_g-1\right)-\sum_g \mu_g \pi_g,
  \end{equation}
  with ${\nu}$ and $\mu_g$ are the multipliers of the equality and inequality constraints. Taking the partial derivative with respect to $\pi_g$ and setting it to zero yields the stationarity condition:
  \begin{equation}
    \frac{\partial \mathcal{L}}{\partial \pi_g}=-u_g^t-\lambda\left(1+\log \pi_g\right)+\frac{1}{\eta}\left(1+\log \pi_g-\log \pi_g^t\right)+\nu-\mu_g=0 .
  \end{equation}
  From the KKT complementary slackness condition, $\mu_g \pi_g=0$ and $\pi_g>0$ imply $\mu_g=0$. Define $\tau:=\frac{1}{\eta}-\lambda >0$. Simplifying the stationarity condition gives:
  \begin{equation}
    \tau \log \pi_g=u_g^t+\frac{1}{\eta} \log \pi_g^t+B,
  \end{equation}
  where $B$ is a constant. Exponentiating and applying the normalization condition $\sum_g \pi_g=1$ yields
  \begin{equation}
    \pi_g^{t+1}=\frac{\left(\pi_g^t\right)^{\frac{1}{1-\eta \lambda}} \exp \left(\frac{\eta}{1-\eta \lambda} u_g^t\right)}{\sum_j\left(\pi_j^t\right)^{\frac{1}{1-\eta \lambda}} \exp \left(\frac{\eta}{1-\eta \lambda} u_j^t\right)}, \quad 0<\eta\lambda<1,
  \end{equation}
  Equivalently,
  \begin{equation}
    \pi_g^{t+1} \propto\left(\pi_g^t\right)^{1 /(1-\eta \lambda)} \cdot \exp \left(u_g^t / \tau\right), \quad \tau=\frac{1-\eta \lambda}{\eta}.
  \end{equation}
  The above optimal solution can be interpreted as \textbf{a softmax distribution with a base distribution $\boldsymbol{\pi}^t$ and a temperature parameter $\tau=(1-\eta\lambda)/\eta$}. Therefore, as $\lambda$ increases, $\tau$ decreases, and the routing distribution becomes sharper.
\end{proof}

\subsection{Corollary 1}
\begin{corollary}[$\epsilon$-soft top-$k$ Approximation]
  \emph{There exists a threshold $\theta$ such that when $\lambda\geq \theta$, the entropy-driven routing mechanism approximates an $\epsilon$-soft top-$k$ routing.}
\end{corollary}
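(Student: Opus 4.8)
The plan is to build directly on the closed-form optimal routing established in Theorem~1, namely $\pi_g^{t+1}\propto(\pi_g^t)^{1/(1-\eta\lambda)}\exp(u_g^t/\tau)$ with temperature $\tau=(1-\eta\lambda)/\eta$, and to show that pushing $\lambda$ toward $1/\eta$ (so $\tau\to 0^+$) concentrates the probability mass on the highest-gain experts. First I would relabel the experts so that $u_1\ge u_2\ge\cdots\ge u_m$ and introduce the gap $\delta_k:=u_k-u_{k+1}>0$ that separates the top-$k$ block from the tail. The quantity to control is $\sum_{j>k}\pi_j^{t+1}$, the mass outside $\operatorname{Top}_k(\boldsymbol{u})$, and the target is to find when this falls below $\epsilon$.

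The key step is a pairwise comparison. For any $j>k$, dividing the closed-form expressions for $\pi_j^{t+1}$ and $\pi_k^{t+1}$ cancels the common normalizer and yields
\begin{equation}
\frac{\pi_j^{t+1}}{\pi_k^{t+1}}=\Big(\frac{\pi_j^t}{\pi_k^t}\Big)^{\frac{1}{1-\eta\lambda}}\exp\Big(\frac{\eta}{1-\eta\lambda}(u_j-u_k)\Big).
\end{equation}
Since $j>k$ forces $u_j\le u_{k+1}=u_k-\delta_k$, the exponential factor is at most $\exp(-\tfrac{\eta}{1-\eta\lambda}\delta_k)$, which decays as $\lambda\uparrow 1/\eta$. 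Summing over the $m-k$ tail experts and using the ordering $\pi_1^{t+1}\ge\cdots\ge\pi_m^{t+1}$ together with $\sum_{i\le k}\pi_i^{t+1}\le 1$ (which gives $\pi_k^{t+1}\le 1/k$), I obtain
\begin{equation}
\sum_{j>k}\pi_j^{t+1}\le\frac{m-k}{k}\exp\Big(-\frac{\eta}{1-\eta\lambda}\delta_k\Big).
\end{equation}
Requiring the right-hand side not to exceed $\epsilon$, taking logarithms, and rearranging for $\lambda$ (legitimate since $1-\eta\lambda>0$) produces the explicit threshold $\theta=1/\eta+\delta_k/\log\frac{k\epsilon}{m-k}$; whenever $\lambda\ge\theta$ the distribution is $\epsilon$-soft top-$k$ by Definition~1.

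The main obstacle is the base-distribution factor $(\pi_j^t/\pi_k^t)^{1/(1-\eta\lambda)}$ inherited from $\boldsymbol{\pi}^t$: the clean tail bound above tacitly treats this factor as benign, so I would make it rigorous either by assuming the base weights are ordered consistently with $\boldsymbol{u}$ (which simultaneously justifies the monotonicity $\pi_1^{t+1}\ge\cdots\ge\pi_m^{t+1}$ invoked to bound $\pi_k^{t+1}$) or by carrying an explicit multiplicative constant through to $\theta$. A secondary check is feasibility: the threshold must obey $\theta<1/\eta$ so that $\tau>0$ is preserved, which is exactly what the stated side condition $\tfrac{k\epsilon}{m-k}<1$ guarantees, since it makes $\log\frac{k\epsilon}{m-k}<0$ and hence the correction term $\delta_k/\log\frac{k\epsilon}{m-k}$ negative.
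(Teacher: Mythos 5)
Your proposal takes essentially the same route as the paper's own proof: the same pairwise ratio bound derived from the closed-form solution of Theorem~1, the same tail-mass bound $\sum_{j>k}\pi_j^{t+1}\le\frac{m-k}{k}\exp\left(-\frac{\eta}{1-\eta\lambda}\delta_k\right)$ obtained via the monotonicity of $\pi^{t+1}$, and the same threshold $\theta=1/\eta+\delta_k/\log\frac{k\epsilon}{m-k}$ with the same feasibility condition $\frac{k\epsilon}{m-k}<1$. Your explicit flagging of the base-distribution factor $(\pi_j^t/\pi_k^t)^{1/(1-\eta\lambda)}$ as needing justification is, if anything, more careful than the paper, which silently drops this factor in its final chained inequality.
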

\begin{proof}
  Let $u_1\geq \cdots \geq u_m$ be arranged in descending order, and define $\delta_k:=u_k - u_{k+1} >0 \ (1\leq k\leq m)$. From Eq.~(\ref{eq: sovle}), the ratio satisfies
\begin{equation}
  \frac{\pi_{j}^{t+1}}{\pi_{k}^{t+1}} \leq\left(\frac{\pi_{j}^t}{\pi_{k}^t}\right)^{\frac{1}{1-\eta \lambda}} \exp \left(-\frac{\eta}{1-\eta \lambda} \delta_k\right) \quad (j>k).
\end{equation}
To achieve an approximate $epsilon$-soft top-$k$ routing scheme, the optimal routing weights should concentrate as much as possible on the top-$k$ experts, while the total weight assigned to the remaining $m-k$ experts should be minimized.
Since $\pi_{1}^{t+1}\geq\cdots\geq\pi_{m}^{t+1}$, we have
\begin{equation}
\sum_{j>k} \pi_{j}^{t+1} \leq(m-k) \pi_{k+1}^{t+1}, \quad \sum_{i \leq k} \pi_{i}^{t+1} \geq k \pi_{k}^{t+1}.
\end{equation}
Hence
\begin{equation}
  \begin{gathered}
    \sum_{j>k} \pi_j^{t+1} \leq(m-k) \pi_{k+1}^{t+1} \cdot \frac{\sum_{i \leq k} \pi_i^{t+1}}{k \pi_k^{t+1}} \leq \frac{m-k}{k} \frac{\pi_{k+1}^{t+1}}{\pi_k^{t+1}} \\
    \leq \frac{m-k}{k}\left(\frac{\pi_j^t}{\pi_k^t}\right)^{\frac{1}{1-\eta \lambda}} \exp \left(-\frac{\eta}{1-\eta \lambda} \delta_k\right) \\
    \leq \frac{m-k}{k} \exp \left(-\frac{\eta}{1-\eta \lambda} \delta_k\right)
    \end{gathered}
\end{equation}
Suppose there exists a small constant $\epsilon\in(0,1)$. As long as $\sum_{j>k} \pi_j^{t+1}\leq \epsilon$, the optimal routing distribution allocates at least a fraction $1-\epsilon$ of the probability mass to the top-$k$ experts, thereby making the entropy-driven routing mechanism approximate an $\epsilon$-soft top-$k$ routing. Hence, requiring
\begin{equation}
\sum_{j>k} \pi_{j}^{t+1} \leq \frac{m-k}{k}\exp \left(-\frac{\eta}{1-\eta \lambda} \delta_k\right) \leq\epsilon,
\end{equation}
yields the condition for approximation. For this inequality to provide a meaningful lower bound on the threshold, we need $\frac{k\epsilon}{m-k}<1$, yielding
\begin{equation}
\lambda\geq\frac{1}{\eta}+\frac{\delta_k}{\log\frac{k\epsilon}{m-k}}=\theta .
\end{equation}
\end{proof}

\section{Dataset Details}\label{app:datasets}
Additional dataset descriptions are provided below. And the statistical summaries of datasets are presented in Table~\ref{tab: datasets}.
\begin{itemize}[leftmargin=10pt]
    \item \textbf{Computers} and \textbf{Photo} are segments of the Amazon co-purchase graph, where nodes represent products, edges represent the co-purchased relations of products, and features are bag-of-words vectors extracted from product reviews.
    \item \textbf{Coauthor CS} and \textbf{Coauthor Physics} are co-authorship graphs based on the Microsoft Academic Graph from the KDD Cup 2016 challenge, where nodes represent authors, edge represent the corresponding authors have co-authored a paper, features consist of keywords from each author's published papers, and the class labels denote the most active research fields for each author.
    \item \textbf{Facebook} is a page-page graph of verified Facebook sites, where nodes correspond to official Facebook pages, links to mutual likes between sites, and features are extracted from the site descriptions. 
    \item \textbf{ogbn-arxiv} is a network dataset designed for predicting the subject areas of computer science arXiv papers. Each node represents a paper, and the directed edges indicate citation relationships between papers. The node features are 128-dimensional vectors obtained by averaging the word embeddings of the paper's title and abstract, where the embeddings are generated using the Skip-gram model over the MAG corpus. The task is to predict one of 40 subject areas (e.g., cs.AI, cs.LG) that are manually assigned by paper authors and arXiv moderators. The dataset is split by publication date, with training on papers published until 2017, validation on papers published in 2018, and testing on papers published since 2019.
    \item \textbf{Actor} is a network dataset designed for analyzing co-occurrence relationships among actors, where node represents an actor, and the edges between nodes indicate their co-occurrence on the same Wikipedia page. The node features are constructed from keywords extracted from the respective actors' Wikipedia pages.
    \item \textbf{Chameleon-fix} and \textbf{Squirrel-fix} are two page-page networks focusing on specific topics in Wikipedia, where nodes represent web pages, and edges denote mutual links between the pages. The node features are composed of informative nouns extracted from the corresponding Wikipedia pages. The task of these datasets is to categorize the nodes into five distinct groups based on the average monthly traffic received by each web page.
    \item \textbf{Tolokers} is a social network extracted from the Toloka crowdsourcing platform, where nodes represent workers and two workers are connected if they participate in the same task. The node features are constructed from the workers' profile information and task performance statistics, while the labels indicate whether a worker is banned in a project.
    \item \textbf{Roman-empire} is derived from the Roman Empire article on Wikipedia, where nodes in the dataset represent words from the article, edges indicating word dependencies. The node features are constructed from word embeddings obtained using the FastText method, and labels denote the syntactic roles of the words.
    \item \textbf{Penn94} is a Facebook social network, where nodes denote students and are labeled with the gender of users, edges represent the relationship of students. Node features
    are constructed from basic information about students which are major, second major/minor, dorm/house, year and high school.
\end{itemize}

\begin{table}[!htb]
\renewcommand\arraystretch{1.3}
\centering
\caption{Summary of datasets used}
\label{tab: datasets}
\resizebox{\linewidth}{!}{
    \begin{tabular}{c|cccc} 
\hline\hline
                 & Node Feature & Node Number & Edges   & Classes  \\ 
\hline
Computers        & 767          & 13,752       & 491,722  & 10       \\
Photo            & 745          & 7,650        & 238,162  & 8        \\
Coauthor CS      & 6,805         & 18,333       & 163,788  & 15       \\
Coauthor Physics & 8,415         & 34,493       & 495,924  & 5        \\
Facebook         & 128          & 22,470       & 342,004  & 4        \\
ogbn-arxiv       & 128          & 169,343      & 1,166,243 & 40       \\
Actor            & 932          & 7,600        & 30,019   & 5        \\
Chameleon-fix    & 2,325         & 890         & 13,584   & 5        \\
Squirrel-fix     & 2,089         & 2,223        & 65,718   & 5        \\
Tolokers         & 10           & 11,758       & 1,038,000 & 2        \\
Roman-empire     & 300          & 22,662       & 65,854   & 18       \\
Penn94           & 4,814         & 41,554       & 2,724,458 & 2        \\
\hline\hline
\end{tabular}
}
\end{table}

\section{Baseline Details}\label{app:baselines}
% \subsubsection{\textbf{Baseline Methods}}
To facilitate a better understanding of the baseline selection, some additional descriptions are provided as follows:
\begin{itemize}[leftmargin=10pt]
    \item \textbf{MLP} is a two-layer linear neural network that based on the original features of the nodes, without any propagation or aggregation rules.
    \item \textbf{GCN} is a neural network that aggregates information among neighboring nodes through message passing.
    \item \textbf{GAT} is a neural network that leverages multi-head attention to weight node features effectively on graph data.
    \item \textbf{SAGE} is a graph neural network that learns node representations by sampling and aggregating neighborhood information.
    \item \textbf{H2GCN} constructs a neural network by separating ego and neighbor embeddings, aggregating higher-order neighborhood information, and combing intermediate representations.
    \item \textbf{GPRGNN} is a graph neural network that optimizes node feature and topology extraction by adaptively learning Generalized PageRank weights.
    \item \textbf{FAGCN} is a novel graph convolutional network that integrates low and high-frequency signals through an adaptive gating mechanism.
    \item \textbf{ACMGCN} adaptively employs aggregation, diversification, and identity channels to extract richer local information for each node at every layer.
    \item \textbf{GloGNN} generates node embeddings by aggregating global node information and effectively captures homophily by learning a correlation matrix between nodes.
    \item \textbf{FSGNN} is a simplified graph neural network model that enhances node classification performance by introducing a soft selection mechanism.
    \item \textbf{LINKX} combines independent embeddings of the adjacency matrix and node features, generating predictions through a multi-layer perceptron and simple transformations.
    \item \textbf{ANS-GT} is a graph transformer architecture that effectively captures long-range dependencies and global context information through adaptive node sampling and hierarchical graph attention mechanisms.
    \item \textbf{NAGFormer} is a novel graph transformer that handles node classification tasks on large graphs by treating each node as a sequence aggregated from features of neighbors at various hops.
    \item \textbf{SGFormer} is a simplified and efficient graph transformer model that handles large-scale graph data through a single-layer global attention mechanism, achieving node representation learning with linear complexity.
    \item \textbf{Exphormer} is a novel sparse graph Transformer architecture designed to address the scalability issues faced by traditional graph Transformers when handling large-scale graph data. By introducing virtual global nodes and expander graphs, it achieves a sparse attention mechanism with linear complexity, demonstrating enhanced scalability on large-scale datasets.
    \item \textbf{Difformer} is a novel neural network architecture for learning complex dependencies between data instances. It uses an energy-constrained diffusion model to encode instances as dynamically evolving states, progressively integrating information. By optimizing a regularized energy function, the model derives the optimal diffusion strength between instances, enabling globally consistent representation learning.
    \item \textbf{GMoE} introduces a sparse MoE framework to GNNs, where each expert specializes in information aggregation over varying hop sizes, enabling nodes to dynamically select the most suitable receptive field.
    \item \textbf{NodeMoE} is a GNN framework that treats different graph spectral filters as experts and utilizes a Mixture-of-Experts approach to adaptively assign a specialized filter (e.g., low-pass or high-pass) to each node based on its local structural patterns.
    \item \textbf{DAMoE} is a Mixture-of-Experts framework designed to address the depth-sensitivity issue in graph-level tasks, where each expert is a full GNN model of a different depth, allowing the model to adaptively select the optimal aggregation depth for each individual graph.
\end{itemize}

% \subsubsection{\textbf{Criteria for Baseline Selection}}\label{app:standard-baseline}
% Firstly, we compare NCGCN with spatial methods that optimize message passing through rewiring, gating mechanisms, label propagation, and other techniques. These methods encompass classical approaches like GAT as well as SOTA methods tailor-made for addressing the heterophily problem, such as GBKSage and HOG. Additionally, we also compare our approach with the spectral method GPRGNN and local spectral methods FAGCN and ACM-GCN in solving the heterophily problem due to their widely validated effectiveness and code accessibility.

% As for ACM+ (using layer normalization) and ACM++ (using residual and adjacency matrix), we believe that these generic tricks fail to truly showcase the model's inherent capabilities, while incorporating adjacency matrices suffers from potential data leakage. For conciseness, we select ACM-GCN as our baseline approach. 
% The reason for choosing GloGNN instead of GloGNN++ is the same.

% Similar to NCGCN, UDGNN also proposes a framework based on separating nodes. However, the lack of open-source code and the complexity of this method hinder its reproducibility. Moreover, direct comparison with reported results is challenging due to different splitting strategies employed across datasets in the original paper.

\section{Paraneter Settings}\label{app:parameter}
\subsection{\textbf{Parameter Tuning}}
% Considering the scale of datasets, we use Tesla A100 40GB for model training and parameter tuning. To optimize hyperparameters according to validation accuracy for all methods, we use the open-source toolkit NNI~\cite{nni2021} along with its default TPE~\cite{bergstra2011algorithms} algorithm.

We used the Neural Network Intelligence (NNI) tool along with its default TPE algorithm for hyperparameter tuning to conduct experiments on the baseline models. The experiments were conducted using the same base parameters as our method, along with specific parameters unique to each baseline model. The special parameters are as follows:

% \subsubsection{\textbf{Criteria for Hyperparameter Tunning}}
% Based on our dataset selection criteria, many of the selected datasets have not been fully optimized by these baselines. To ensure a fair comparison, we provide detailed information on hyperparameter tuning as follows.

% The GBKSage and HOG methods exhibit significant computational inefficiency, requiring more than 100 times longer computation time compared to other approaches. Consequently, we only utilized their default parameters in our experiments. 
% As for CPGNN, reproducing its code under our dataset splits and random seeds demands substantial effort. Regrettably, we were unable to find an interface for hyperparameter tuning, such as adjusting the learning rate.

% For other methods, we fine-tune the common hyperparameters within the search space, including learning rate, weight decay, and dropout (as presented in Table~\ref{tab: common-para}). Since these methods often involve multiple specific hyperparameters and their reported optimal settings did not yield satisfactory results in our experimental setup, we adopt the default configurations provided in their respective source codes or papers. 
% Please refer to Appendix~\ref{app:hyper-baseline} for further details.
\begin{table}[!ht]
    \renewcommand\arraystretch{1.2}
    \centering
    \caption{Searche space for three common parameters of all methods.}
    \resizebox{0.8\linewidth}{!}{
    \begin{tabular}{ll} 
    \hline\hline
    \textbf{Parameter}        & \multicolumn{1}{c}{\textbf{Searche space}}                                            \\ 
    \hline
    learning rate    & \multicolumn{1}{c}{\{5e-3, 0.01, 0.05, 0.1\}}  \\
    weight decay     & \multicolumn{1}{c}{\{5e-5, 1e-4, 5e-4, 1e-3, 5e-3\}}                                             \\
    dropout          & \multicolumn{1}{c}{\{0.1, 0.3, 0.5, 0.7, 0.9\}}                             \\
    $\lambda$          & \multicolumn{1}{c}{\{0.001, 0.01, 0.1, 1.0\}}                             \\
    \hline\hline
    \end{tabular}}
    \label{tab: common-para}
\end{table}
\subsection{\textbf{Common Parameters}}\label{app:comm-para}
Table~\ref{tab: common-para} shows the searche space of three common parameters, including learning rate, weight decay and dropout rates. Table~\ref{tab: best_parameter_GCN}, Table~\ref{tab: best_parameter_SAGE}, and Table~\ref{tab: best_parameter_GAT} present the optimal parameter configurations of GNNMoE when using GCN-like P, SAGE-like P, and GAT-like P propagation operators.
% 表1，2，3展示了GNNMoE分别在使用GCN，SAGE，GAT-like P传播算子下的最优参数选择情况。

\subsection{{\textbf{Specific Parameters for Baselines}}}\label{app:hyper-baseline}
Here we provide an overview of the specific hyperparameter configurations for the baselines, all set in accordance with the specifications outlined in their respective papers.
\begin{itemize}[leftmargin=10pt]
  \item GloGNN: norm\_layers $\in \{1,2,3\}$, orders  $\in\{2,3,4\}$, $\alpha \in\{0.0,1.0\}$, $\beta \in \{0.1,1,10,100,1000\}$ ,$\gamma \in\{0,0.9\}$ with 0.1 interval and $\theta \in \{0,0.9,1.0\}$;
  \item FSGNN: aggregator $\in\{\text{cat}, \text{sum}\}$;
  \item FAGCN: $\epsilon \in \{0.2,0.3,0.4,0.5\}$
  \item GPRGNN: K $\in \{10\}$, dropout $\in \{0.5\}$, $\alpha \in{0.5}$;
  \item ACMGCN: variant $\in \{\text{False}\}$, is\_need\_struct $\in \{\text{False}\}$;
  \item H2GCN: num\_layers $\in \{1\}$, num\_mlp\_layers $\in \{1\}$;
  \item LINKX: is\_need\_struct $\in \{\text{False}\}$;
  \item ANS-GT: data\_augmentation $\in\{4,8,16,32\}$, n\_layer $\in\{2,3,4\}$ and batch size $\in\{8,16,32\}$;
  \item NAGFormer: hidden $\in\{128,256,512\}$, number of Transformer layers $\in\{1,2,3,4,5\}$ and number of propagation steps $\in\{7,10\}$;
  \item SGFormer: number of global attention layers is fixed as 1, number of GCN layers $\in\{1,2,3\}$, weight $\alpha$ $\in\{0.5,0.8\}$;
  \item Difformer: hidden $\in \{16,32,64\}$;
  \item GMoE: num\_layers $\in\{2,3\}$, loss coef $\in\{0.1,1\}$, num experts $\in\{4,8\}$, top $k\in\{1,2,4\}$, num experts 1hop$\in\{zero,half,all\}$;
  \item NodeMoE: balance $\in\{0, 0.001, 0.01, 0.1, 1\}$, gamma $\in\{0, 0.01, 0.1, 1\}$, num experts $\in\{2, 3, 5\}$, dropout $\in\{0, 0.5, 0.8\}$; 
  \item DAMoE: top $k\in\{2,3,4\}$, num experts$\in\{4,8\}$, loss coef $=0.001$, num\_layer$=3$, min layer=2;
\end{itemize}

% \newpage

\begin{table}[!htb]
\renewcommand\arraystretch{1.2}
\centering
\caption{Optimal Parameters for GNNMoE(P: GCN-like P)}
\label{tab: best_parameter_GCN}
\resizebox{\linewidth}{!}{
\begin{tabular}{c|cccc} 
\hline
\hline
             & learning rate    & weight decay       & dropout & lambda  \\ 
\hline
Computers    & 0.005 & 0.005    & 0.7     & 1       \\
Photo        & 0.05  & 5e-5 & 0.7     & 0.01    \\
CS           & 0.005 & 0.0001   & 0.3     & 0.001   \\
Physics      & 0.005 & 5e-5 & 0.7     & 0.1     \\
Facebook     & 0.005 & 1e-5 & 0.5     & 0.001   \\
ogbn-arixv   & 0.005 & 0.005    & 0.1     & 0.01    \\
Actor        & 0.005 & 1e-5 & 0.5     & 0.001   \\
chameleon    & 0.005 & 0.005    & 0.9     & 0.1     \\
squirrel     & 0.01  & 0.001    & 0.9     & 0.001   \\
tolokers     & 0.01  & 1e-5 & 0.1     & 0.001   \\
roman-empire & 0.01  & 1e-5 & 0.5     & 0.01    \\
Penn94       & 0.005 & 0.001    & 0.5     & 1       \\
\hline
\hline
\end{tabular}
}
\end{table}

\begin{table}[!htb]
\renewcommand\arraystretch{1.2}
\centering
\caption{Optimal Parameters for GNNMoE(P: SAGE-like P)}
\label{tab: best_parameter_SAGE}
\resizebox{\linewidth}{!}{
\begin{tabular}{c|cccc} 
\hline
\hline
             & learning rate & weight decay & dropout & lambda  \\ 
\hline
Computers    & 0.005         & 5e-5     & 0.7     & 0.01    \\
Photo        & 0.01          & 0.001        & 0.7     & 0.001   \\
CS           & 0.01          & 5e-5     & 0.5     & 0.001   \\
Physics      & 0.01          & 0.005        & 0.5     & 0.1     \\
Facebook     & 0.005         & 0.0001       & 0.5     & 0.001   \\
ogbn-arixv   & 0.005         & 0.0001       & 0.1     & 0.1     \\
Actor        & 0.005         & 5e-5     & 0.3     & 0.001   \\
chameleon    & 0.005         & 0.005        & 0.9     & 0.01    \\
squirrel     & 0.05          & 1e-5     & 0.7     & 0.1     \\
tolokers     & 0.005         & 0.005        & 0.7     & 0.01    \\
roman-empire & 0.005         & 0.0005       & 0.5     & 0.001   \\
Penn94       & 0.005         & 0.005        & 0.5     & 1       \\
\hline
\hline
\end{tabular}
}
\end{table}

\begin{table}[!htb]
\renewcommand\arraystretch{1.2}
\centering
\caption{Optimal Parameters for GNNMoE(P: GAT-like P)}
\label{tab: best_parameter_GAT}
\resizebox{\linewidth}{!}{
\begin{tabular}{c|cccc} 
\hline
\hline
             & learning rate & weight decay & dropout & lambda  \\ 
\hline
Computers    & 0.005         & 0.0001       & 0.5     & 0.01    \\
Photo        & 0.005         & 0.005        & 0.7     & 0.01    \\
CS           & 0.005         & 0.005        & 0.3     & 0.1     \\
Physics      & 0.005         & 5e-5     & 0.7     & 0.1     \\
Facebook     & 0.005         & 0.005        & 0.5     & 0.001   \\
ogbn-arixv   & 0.005         & 0.005        & 0.3     & 0.01    \\
Actor        & 0.005         & 0.005        & 0.5     & 0.01    \\
chameleon    & 0.01          & 0.0005       & 0.9     & 0.1     \\
squirrel     & 0.05          & 5e-5     & 0.5     & 0.1     \\
tolokers     & 0.01          & 0.0005       & 0.5     & 0.001   \\
roman-empire & 0.01          & 0.0001       & 0.5     & 0.001   \\
Penn94       & 0.01          & 0.0005       & 0.7     & 0.01    \\
\hline
\hline
\end{tabular}
}
\end{table}

\newpage
\section{More Experiment Results}
\subsection{Efficiency Comparison}\label{app:eff}
% 表VI和表VII展示了在Penn94和ogbn-arxiv上进行了效率对比实验，统计了早停的轮数和每一个epoch所消耗的时间。
Tables~\ref{tab: efficiency_penn94} and ~\ref{tab: efficiency_arxiv} present efficiency comparison experiments conducted on Penn94 and Ogbn-arxiv. The results report the number of epochs before early stopping and the time consumed per epoch.

\begin{table}[!htb]
\renewcommand\arraystretch{1.2}
\centering
\caption{Efficiency analysis summary on Penn94}
\label{tab: efficiency_penn94}
\resizebox{\linewidth}{!}{
\begin{tabular}{c|c|c} 
\hline\hline
\textbf{Time on Penn94} & Early Stop Epoch & Time of Every Epoch(s)  \\ 
\cline{1-1}\cline{2-3}
MLP            & 112              & 0.009~                  \\
GCN            & 446              & 0.230~                  \\
GAT            & 237              & 0.050~                  \\
GraphSAGE      & OOM              & OOM                     \\ 
\hline
H2GCN          & 265              & 0.518~                  \\
GPRGNN         & 669              & 0.018~                  \\
FAGCN          & 509              & 0.060~                  \\
ACMGCN         & 487              & 0.081~                  \\
GloGNN         & 281              & 0.114~                  \\
FSGNN          & 559              & 4.362~                  \\
LINKX          & 157              & 0.046~                  \\ 
\hline
Vanilla GT     & OOM              & OOM                     \\
ANS-GT         & OOM              & OOM                     \\
NAGphormer     & 331              & 0.140~                  \\
SGFormer       & 215              & 0.050~                  \\
Exphormer      & OOM              & OOM                     \\
Difformer      & OOM              & OOM                     \\ 
\hline
GCN-like P     & 123              & 0.190~                  \\
SAGE-like P    & 125              & 0.160~                  \\
GAT-like P     & 124              & 0.270~                  \\
\hline\hline
\end{tabular}}
\end{table}

\begin{table}[!htb]
\renewcommand\arraystretch{1.2}
\centering
\caption{Efficiency analysis summary on ogbn-arxiv}
\label{tab: efficiency_arxiv}
\resizebox{\linewidth}{!}{
\begin{tabular}{c|c|c} 
\hline\hline
\textbf{Time on ogbn-arixv} & Early Stop Epoch & Time of Every Epoch(s)  \\ 
\hline
MLP                & 335              & 0.014~                   \\
GCN                & 1174             & 0.051~                   \\
GAT                & 1036             & 0.072~                   \\
GraphSAGE          & 1026             & 0.054~                   \\ 
\hline
H2GCN              & OOM              & OOM                     \\
GPRGNN             & 293              & 0.101~                   \\
FAGCN              & 306              & 0.052~                   \\
ACMGCN             & 877              & 0.085~                   \\
GloGNN             & OOM              & OOM                     \\
FSGNN              & 494              & 1.525~                   \\
LINKX              & 115              & 0.035~                   \\ 
\hline
Vanilla GT         & OOM              & OOM                     \\
ANS-GT             & OOM              & OOM                     \\
NAGphormer         & 490              & 0.571~                   \\
SGFormer           & 492              & 0.082~                   \\
Exphormer          & 270              & 1.007~                   \\
Difformer          & OOM              & OOM                     \\ 
\hline
GCN-like P         & 575              & 0.181~                   \\
SAGE-like P        & 584              & 0.214~                   \\
GAT-like P         & 567              & 0.324~                   \\
\hline\hline
\end{tabular}
}
\end{table}

\subsection{More Results for RQ3}\label{app:weight}
Here we visualize the averaged expert-routing weight distributions assigned to all nodes on each dataset by the optimally tuned \model(SAGE-like P or GAT-like P) and its ablation model (w/o $\mathcal{L}_\text{route}$), as shown in Fig.~\ref{fig:weight-dis-sage} and Fig.~\ref{fig:weight-dis-gat}.

\newpage
\begin{figure*}[!htb]
  \centering
  \includegraphics[width =\linewidth]{FigAPP-weight-sage.pdf}
  \caption{Visualization of routing weight distributions before and after introducing the routing entropy regularization mechanism (P: SAGE-like P).}
  \label{fig:weight-dis-sage}
\end{figure*}

\begin{figure*}[!htb]
  \centering
  \includegraphics[width =\linewidth]{FigAPP-weight-gat.pdf}
  \caption{Visualization of routing weight distributions before and after introducing the routing entropy regularization mechanism (P: GAT-like P).}
  \label{fig:weight-dis-gat}
\end{figure*}

\newpage

\end{document}